\definecolor{babypink}{rgb}{0.96, 0.76, 0.76}
\definecolor{babyblue}{rgb}{0.54, 0.81, 0.94}
\newcommand{\dat}[2]{D^{#1}_{#2}}
\newcommand{\xdat}[2]{X^{#1}_{#2}}
\newcommand{\info}[2]{\gamma^{#1}_{#2}}
\newcommand{\infomax}[2]{\hat{\gamma}(#1, #2)}
\newcommand{\mean}[3]{\mu^{#1}_{#2}(#3)}
\newcommand{\std}[3]{\sigma^{#1}_{#2}(#3)}
\newcommand{\var}[3]{(\std{#1}{#2}{#3})^2}
\newcommand{\bet}[2]{\widehat{\beta}^{#1}_{#2}}
\newcommand{\kmat}[2]{K^{#1}_{#2}}
\newcommand{\deffe}[1]{\widetilde{d}^{#1}}
\newcommand{\maxpts}[1]{N_{#1}}
\newcommand{\reg}{\alpha}
\newcommand{\porder}[1]{\widetilde{\mcO}(#1)}
\newcommand{\diam}[1]{\rho_{#1}}
\newcommand{\radius}[1]{\rho_{#1}}
\newcommand{\E}{\mathbb{E}}
\newcommand{\dom}{\mcX}
\newcommand{\1}[1]{1{\{#1\}}}
\newcommand{\UCB}{\text{UCB}}
\newcommand{\GP}{\mathcal{GP}}
\newcommand{\argmax}[1]{\operatornamewithlimits{argmax}_{#1}}
\newcommand{\subgauss}{L}
\newcommand{\N}{\mathbb{N}}
\newcommand{\R}{\mathbb{R}}
\newcommand{\mcA}{\mathcal{A}}
\newcommand{\mcB}{\mathcal{B}}
\newcommand{\mcF}{\mathcal{F}}
\newcommand{\mcH}{\mathcal{H}}
\newcommand{\mcM}{\mathcal{M}}
\newcommand{\mcO}{\mathcal{O}}
\newcommand{\mcV}{\mathcal{V}}
\newcommand{\mcX}{\mathcal{X}}
\newcommand{\mcZ}{\mathcal{Z}}
\begin{document}

\twocolumn[

\aistatstitle{Bandit optimisation of functions in the Mat\'ern kernel RKHS}

\aistatsauthor{ David Janz \And David R. Burt \And  Javier Gonz\'alez }

\aistatsaddress{ University of Cambridge \And  University of Cambridge \And Amazon } 
]

\begin{abstract}
We consider the problem of optimising functions in the reproducing kernel Hilbert space (RKHS) of a Mat\'ern kernel with smoothness parameter $\nu$ over the domain $[0,1]^d$ under noisy bandit feedback. Our contribution, the $\pi$-GP-UCB algorithm, is the first practical approach with guaranteed sublinear regret for all $\nu>1$ and $d \geq 1$. Empirical validation suggests better performance and drastically improved computational scalability compared with its predecessor, Improved GP-UCB.
\end{abstract}

\section{Introduction}
We consider the black-box function optimisation problem using the stochastic bandit formalism of sequential decision making \citep{robbins1952some}. Under this model, we consider an agent that sequentially selects an action $x_t$ from an action set $\dom$ at each time step $t=1,\dotsc,T$ for $T \in \N$ and observes
\[ 
    y(x_t) = f(x_t) + \epsilon_t,
\]
where $y$ is the reward function, $f$ is the expected reward function and we assume $\epsilon_t$ is conditionally subGaussian given $x_t$. The agent's goal is to minimise regret, given by
\[ 
    R_T = \sum_{t=1}^T f(x^\star)-f(x_t),
\]
where $f(x^\star)$ is the reward associated with an optimal arm. Regret is closely linked to bounds on the convergence of black-box optimisation in the presence of noise: bounding $R_T$ by a quantity sublinear in $T$ implies $f(x^\star) - \max_{t \leq T} f(x_t)\leq R_T/T \to 0$ as $T \to \infty$, and so an algorithm achieving such a bound will converge to a subset of the global optima of $f$. 

A standard approach to the bandit problem is to construct a $1-\delta$ confidence upper bound for $f$, of the form
\[ \mu_{t-1}(x) + \beta_t(\delta) \sigma_{t-1}(x), \]
where $\mu_{t-1}$ and $\sigma_{t-1}$ are the mean and standard deviation predictors for $f$ given by a suitable estimator based on observations prior to time $t$, and $\beta_t(\delta)$ is a confidence width multiplier that ensures the expression is an upper bound on $f$ with probability at least $1-\delta$. Algorithms then select $x_t \in \dom$ that maximises this bound \citep{auer2002using,auer2002finite,auer2010ucb}. This upper confidence bound (UCB) strategy naturally balances exploration, sampling in regions where the uncertainty is large, with exploitation, focusing on regions where the mean is large, and leads to algorithms with minimax optimal regret bounds for the case where $\dom$ is finite \citep{audibert2009minimax}. 

From the perspective of black-box function optimisation, a particularly interesting bandit problem is the kernelised continuum-armed bandit \citep{srinivas2010gaussian}. Here,~$f$ is assumed to be in the closure of functions on $[0,1]^d$ expressible as a linear combination of a feature embedding parameterised by a kernel~$k$. The properties of the functions in the resulting space, referred to as the RKHS of $k$, are determined by the choice of the kernel. For example, the RKHS corresponding to the linear kernel contains linear functions, and in this case existing kernelised bandit algorithms recover bounds that match those of the relevant stochastic linear bandit algorithms \citep{chowdhury2017kernelized,abbasi2011improved}. For a squared exponential kernel, the corresponding RKHS contains only infinitely differentiable functions, and here the existing methods match known lower bounds up to polylogarithmic factors \citep{srinivas2010gaussian,scarlett17lower}. 

In this work, we focus on the RKHS associated with a Mat\'ern kernel, parameterised by a smoothness parameter $\nu$ \citep{stein2012interpolation}. For a given $\nu$, the Mat\'ern RKHS contains all $\lfloor \nu \rfloor$-times continuously differentiable functions, and therefore for any $\nu< \infty$, contains as a strict subset the RKHS of both the linear and the squared exponential kernels. The Mat\'ern RKHS is of particular practical significance, since it offers a more suitable set of assumptions for the modelling and optimisation of physical quantities \citep{stein2012interpolation,rasmussen2006gaussian}. However, the theoretical guarantees offered for this class of functions by existing kernelised algorithms, such as KernelUCB \citep{valko2013finite}, GP-UCB \citep{srinivas2010gaussian} and Improved GP-UCB \citep{chowdhury2017kernelized}, are limited. Specifically, these guarantee that the regret after $T$ steps, $R_T$, is bounded with high probability as\footnote{$f(t) = \porder{t^a} \iff \forall \epsilon > 0, \ f(t) = \mcO(t^{a+\epsilon})$.}
\[ 
    R_T = \widetilde{O}\left(T^{\min\left\{1, \frac{3d^2 + 2\nu}{2d^2+4\nu}\right\}}\right),
\]
leaving a large gap to the $\Omega(T^{\frac{d + \nu}{d + 2\nu}})$ algorithm-agnostic lower bound for this problem \citep{scarlett17lower}. Since this bound is linear for $2\nu \leq d^2$, existing practical kernel-based algorithms are not guaranteed to converge for functions with fewer than $\lfloor d^2/2\rfloor$ derivatives.

Our main contribution is an algorithm that successfully tackles the Mat\'ern RKHS bandit problem for all $\nu > 1$ and all $d \geq 1$. The algorithm, Partitioned Improved GP-UCB ($\pi$-GP-UCB), offers a high probability regret bound of order
\begin{equation}\label{eq:final-regret}
    R_T = \widetilde{O}\left(T^{\frac{d(2d+3)+2\nu}{d(2d+4) + 4\nu}}\right),
\end{equation}
and therefore guarantees convergence for once differentiable functions of any finite dimensionality. Our contribution is not limited to theory: $\pi$-GP-UCB shows strong empirical performance on 3-dimensional functions in the Mat\'ern $\nu=3/2$ RKHS (in contrast, Improved GP-UCB barely outperforms uniformly sampling of arms). Moreover, our experiments show that despite using Gaussian process regression to construct confidence intervals, $\pi$-GP-UCB achieves an empirically near-linear runtime. 

Our analysis also results in tighter bounds on the effective dimension associated with the Mat\'ern kernel RKHS, an important quantity in the context of kernelised bandit problems which immediately improves bounds for a range of existing algorithms.

\section{Background}\label{sec:background}
Our work builds on Improved GP-UCB (IGP-UCB), and like IGP-UCB, it uses Gaussian process regression to construct confidence intervals. We briefly outline these topics and introduce the required notation.
 
\paragraph{Gaussian process regression}
We use $\dat{\dom}{t}= \{(x_i, y_i) \colon i \leq t\}$ to denote a set of $t$ observations with $x_i \in \dom$, and
\[
\dat{A}{t}  = \{(x^A_i, y^A_i) \in \dat{\dom}{t} \colon x^A_i \in A\}
\] 
for $A \subset \dom$ to denote the subset of these located in $A$. We denote the sequence of input locations within $A$ by $X_t^A$ and by $(y^A_1, \dotsc, y^A_N)$ the associated sequence of observations.

For $\dat{A}{t}$ of cardinality $N$, $x \in A$, a kernel $k$, assumed to be normalised such that $k(x, x) = 1$, we define 
\[
k^A_t(x)=[k(x^A_1, x), \dotsc, k(x^A_N, x)]^T,
\]
as well as
\[ 
\kmat{A}{t} = [k(x, x^\prime)]_{x, x^\prime \in X^A_t} \ \ \text{and} \ \  y^A_{1:t} = [y^A_1, \dotsc, y^A_N]^T.
\]

For a regularisation parameter $\reg>0$, we define the Gaussian process regressor on $A \subset \dom$ by a mean,
\[
\mean{A}{t}{x} = k^A_t(x)^T(\kmat{A}{t} + \reg I)^{-1} y^A_{1:t}, 
\]
and an associated predictive standard deviation,
\[
\std{A}{t}{x} = \sqrt{k(x, x) - k^A_t(x)^T(\kmat{A}{t} + \reg I)^{-1} k^A_t(x)},
\]
for each $x \in A$. Note that $\std{A}{t}{x}$ is monotone decreasing in both $t$ and $A$, meaning $\std{A}{t}{x}\leq\std{A'}{t}{x}$ if $A' \subset A$ and $\std{A}{t}{x}\leq\std{A}{t+1}{x}$ \citep{vivarelli1998studies}.

\paragraph{Effective dimension}
While Gaussian process regressors are often defined through an infinite dimensional feature embedding, due to finite data and regularisation, the number of features that have a noticeable impact on the regression model can be small.  The effective dimension of the Gaussian process regressor on the set $A \subset \dom$, defined as
\[
\deffe{A} =\text{Tr}\big(\kmat{A}{T}(\kmat{A}{T}+\alpha I)^{-1}\big)= \alpha^{-1}\sum_{t=1}^{T} \var{A}{T}{x_t},
\]
provides an estimate of the number of relevant features used in the regression problem \citep{zhang2005learning,valko2013finite}, and frequently appears in the bounds on kernelised bandit algorithms. It is closely related to information gain, defined 
\[
\info{A}{t} = \frac{1}{2}\log|I + \reg^{-1}\kmat{A}{t}|
\] 
for $A \subset \dom$, with the two of the same order up to polylogarithmic factors \citep[proposition 5]{calandriello2019gaussian}. We shall use this relationship throughout.

\paragraph{Mat\'ern kernel family} The Mat\'ern family consists of kernels of the form $k(x,x')=\kappa(r)$ where $r=x-x'$ and $\kappa$ is the Fourier transform of a Student's t-density,
\begin{equation}\label{eq:matern-density}
    \lambda(\omega) = C_{\ell,d}\left(1 + (\ell\|\omega\|_2)^2 \right)^{-\nu-d/2},
\end{equation}
for $\omega$ the $d$-dimensional frequency vector and
\begin{equation*}
    C_{\ell,d} = \ell^d\frac{\Gamma(\nu + d/2)}{\pi^{d/2} \Gamma(\nu)}
\end{equation*}
for $\ell, \nu > 0$ parameters of the kernel.

\paragraph{The Improved GP-UCB algorithm} The IGP-UCB algorithm is an approach to the kernelised bandit problem based on the classic GP-UCB Bayesian optimisation algorithm. For $f$ in the RKHS of a known kernel, it improves on the regret bounds of GP-UCB by a polylogarithmic factor and significantly improves empirical performance. For readers familiar with stochastic linear bandits, the step from GP-UCB to IGP-UCB mirrors that from ConfidenceBall \citep{dani2008stochastic} to OFUL \citep{abbasi2011improved}. 

IGP-UCB assumes a known bound $B$ on the RKHS norm of the function $f$ and a known bound on the subGaussianity constant $\subgauss$ of $\epsilon_t$. Then, for a chosen $\delta \in (0,1)$ and $Z \subset \R^d$, IGP-UCB uses a Gaussian process regressor mean $\mean{Z}{t-1}{x}$ and standard deviation $\std{Z}{t-1}{x}$, along with a confidence width multiplier 
\begin{equation}
 \beta^Z_t(\delta) = B + \subgauss\sqrt{2\left(\info{Z}{t} + 1 + \log(1/\delta)  \right)},   
\end{equation}
to construct $1-\delta$ probability confidence intervals for the restriction of $f$ to $Z$.  \citet{chowdhury2017kernelized} choose to use $Z=\dom$, that is, consider the whole domain, and show that selecting observations that maximise this upper confidence bound leads to regret bounded as $\mcO(\info{\dom}{T}\sqrt{T})$ with probability $1-\delta$. 

Intuitively, the IGP-UCB algorithm needs strong regularity assumptions to guarantee sublinear regret because the information gain term, $\info{\dom}{t}$, contained within the confidence width parameter $\beta^{\dom}_{t}(\delta)$, grows too quickly with $t$ otherwise. Specifically, in the case of the Mat\'ern kernel, $\info{\dom}{t}$ can be lower and upper bounded as
\begin{equation}\label{eq:matern-info-bounds}
    \porder{T^{\frac{d(d+1)}{d(d+1) + 2\nu}}} \quad \text{and} \quad \Omega(T^{\frac{d}{d+2\nu}})
\end{equation}   
respectively, where the former is given in \citet{srinivas2010gaussian} and the latter can be deduced from combining the bounds in \citet{valko2013finite} and \citet{scarlett17lower}. Our main contribution can be understood as addressing this issue. We provide a construction which leads to confidence intervals that grow only polylogarithmically with $T$.

\section{Main contribution} 

We introduce \textit{Partitioned Improved GP-UCB} ($\pi$-GP-UCB), an algorithm that at each time step constructs a closed cover of the domain $\dom$ and selects points by taking a maximiser of the IGP-UCB upper confidence bound constructed independently on each cover element. Throughout the paper, we will make use of the following two constants,
\[ 
b = \frac{d+1}{d+2\nu} \quad \text{and} \quad q = \frac{d(d+1)}{d(d+2)+2\nu},
\]
which depend on the RKHS only. Also, for a hypercube $A \subset \dom$ we will use $\diam{A}$ denote its $\ell^\infty$-diameter, i.e. the length of any of its sides.

\paragraph{The $\pi$-GP-UCB algorithm} 
Choose a $1-\delta$ confidence level. Let $\mcA_1$ be any set of closed hypercubes overlapping at edges only, of cardinality at most $\mcO(T^q)$, that covers the domain $\dom$.\footnote{For example, $\mcA_1=\{\dom\}$.} At each time step $t$ select a query location $x_t$ and then construct a new cover $\mcA_t$ as follows:

\emph{Point selection}. Fit an independent Gaussian process with $\reg = 1 + 2/T$ on each cover element $A \in \mcA_t$, conditioned only on data within $A$, and select the next point to evaluate by maximising
\[
    \UCB_t(x) =\!\! \max_{A \in \mcA_t \colon x \in A} \mean{A}{t-1}{x} + \bet{A}{t} \std{A}{t-1}{x},
\]
where $\bet{A}{t} = \beta^A_t(\delta / \widetilde{N}_t)$, with $\widetilde{N}_t = 4(t+1)^{bd}$.

\emph{Splitting rule}. Split any element $A \in \mcA_{t-1}$ for which
$\diam{A}^{-1/b} < |\dat{A}{t}|+1$ along the middle of each side, resulting in $2^d$ new hypercubes. Let $\mcA_t$ be the set of the newly created hypercubes and the elements of $\mcA_{t-1}$ which were not split.

\paragraph{Properties of algorithm} The construction of the cover $\mcA_t$ ensures the following two properties hold:
\begin{lem}\label{lem:partition-info}
Let $A$ be a subset of $\dom$ and suppose there exists a $\tau$ such that $A\in \mcA_{\tau}$. Let $\tau'(A)=\max\{t \colon A \in \mcA_t \}$. Then, for some $C>0$,
$\info{A}{\tau'(A)} \leq C \log T \log \log T$.
\end{lem}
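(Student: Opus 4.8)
The plan is to combine two ingredients. First, the splitting rule forces any cube $A$ to carry at most $O(\diam{A}^{-1/b})$ observations over the whole stretch during which it is active. Second, the information gain of the Mat\'ern GP restricted to a hypercube of side $\rho$ after $N$ observations decays with $\rho$; once $\rho$ is small enough this more than cancels the growth in $N$. Substituting the first bound into the second collapses the estimate to a polylogarithmic quantity, and the value $b=\tfrac{d+1}{d+2\nu}$ is reverse-engineered precisely so that this collapse happens for every $\nu>1$.

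\textbf{Bounding $|\dat{A}{\tau'(A)}|$.} I would first show $N:=|\dat{A}{\tau'(A)}|<\diam{A}^{-1/b}$ by a short induction along the creation order of cubes. If $A$ was already in $\mcA_{\tau'(A)-1}$ and survived the split at step $\tau'(A)$, the splitting rule gives $N+1\le\diam{A}^{-1/b}$ outright. If instead $A$ was created at step $\tau'(A)$ from a parent $A'$ with $\diam{A'}=2\diam{A}$, the same reasoning applied to $A'$ at the step it was split shows $|\dat{A'}{\tau'(A)}|\le\diam{A'}^{-1/b}$, and since $\dat{A}{\tau'(A)}\subseteq\dat{A'}{\tau'(A)}$ we get $N\le(2\diam{A})^{-1/b}<\diam{A}^{-1/b}$; unwinding this down to $\mcA_1$ makes it precise. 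Either way, writing $\rho:=\diam{A}$, we have $N<\rho^{-1/b}$.

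\textbf{A scale-aware information-gain bound --- the crux.} Writing the information gain as the log-determinant $\info{A}{\tau'(A)}=\tfrac12\log\det(I+\reg^{-1}\kmat{A}{\tau'(A)})$, the central estimate is that, for a hypercube $A$ of side $\rho\le 1$ carrying $N$ observations,
\[
  \info{A}{\tau'(A)}\;\lesssim\;\bigl(\rho^{2\nu}N\bigr)^{\frac{d}{d+2\nu}}\cdot\mathrm{polylog}(T).
\]
I would obtain this by passing to the Mercer eigenvalues of the Mat\'ern kernel operator on $A$ under the uniform measure and rescaling $A$ to the unit cube: this turns the kernel into a Mat\'ern kernel with lengthscale enlarged by the factor $\rho^{-1}$, hence close to rank one, with polynomial correction eigenvalues of sizes $\rho^2,\rho^4,\dots$ and a Mat\'ern correction of size $\rho^{2\nu}$. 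Truncating the log-determinant at the eigenvalue index that balances its head against its polynomial tail, and invoking the stated equivalence (up to logs) between information gain and the effective dimension $\deffe{A}$, yields the displayed bound --- these are precisely the sharpened effective-dimension estimates advertised in the introduction. Tracking the logarithmic factors through this argument (one from the truncation, one from the information-gain / effective-dimension comparison, together with $N\le T$ and the fact that $\rho^{-1}$ is polynomially bounded in $T$) produces the $\log T\log\log T$ of the statement. This step is the main obstacle: the query points are adversarial rather than i.i.d., so the empirical Gram spectrum must be controlled by the operator spectrum for arbitrary point configurations --- which for the Mat\'ern kernel leans on uniform bounds for the Mercer eigenfunctions --- and one must then keep careful track of every logarithmic factor to land on $\log T\log\log T$ rather than merely $\porder{1}$.

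\textbf{Combining.} Plugging $N<\rho^{-1/b}$ into the core estimate, the residual power of $\rho$ is
\[
  \rho^{\left(2\nu-\frac1b\right)\frac{d}{d+2\nu}}=\rho^{\frac{d(2\nu-1)}{d+1}\cdot\frac{d}{d+2\nu}},
\]
using $\tfrac1b=\tfrac{d+2\nu}{d+1}$. For $\nu>1$ this exponent is strictly positive and $\rho\le\diam{\dom}=1$, so $(\rho^{2\nu}N)^{d/(d+2\nu)}\le 1$, whence $\info{A}{\tau'(A)}\lesssim\mathrm{polylog}(T)\le C\log T\log\log T$ for a constant $C$ depending only on $d$ and $\nu$. (The observation-count bookkeeping of the second paragraph, in contrast to the crux, is routine once one notices that a freshly created cube begins strictly below its splitting threshold.)
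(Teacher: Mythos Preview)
Your high-level strategy matches the paper exactly: bound the number of observations $N$ in $A$ via the splitting rule, obtain a scale-aware information-gain bound that depends on $\diam{A}$, and observe that the splitting threshold $b$ was reverse-engineered so the two cancel. The bookkeeping in your second paragraph is essentially the paper's observation that $N_A\le\diam{A}^{-1/b}$.

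The gap is in the crux. The paper does not establish anything like your intermediate estimate $\info{A}{\tau'(A)}\lesssim(\rho^{2\nu}N)^{d/(d+2\nu)}\,\mathrm{polylog}(T)$. Instead it feeds the Mercer eigenvalue bound $\lambda_s\le C\rho^{2\nu}s^{-(2\nu+d)/d}$ (obtained from \cref{thm:seeger} applied on the cube $A$ of side $\rho$) directly into the Srinivas--Krause--Kakade--Seeger maximum-information-gain bound (\cref{thm:theorem8}) with the explicit parameter choices $\zeta=d$ and $s_0=\lfloor\log\log N\rfloor$. The three resulting pieces are handled separately: the first is $s_0\log(rn_N/\alpha)=\mcO(\log N\log\log N)$; the second is $\mcO(\mcV_A N^{\zeta+1}\rho^{2\nu}\log N)=\mcO(\rho^{d+2\nu}N^{d+1}\log N)$, which collapses to $\mcO(\log N)$ precisely because $N\le\rho^{-1/b}$ and $1/b=(d+2\nu)/(d+1)$; and the third is $\mcO(N^{1-\zeta/d})=\mcO(1)$. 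The $\log\log T$ factor is not the residue of careful log-tracking --- it enters through the deliberate choice of $s_0$.

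Two concrete issues with your sketch. First, the clean eigenvalue-counting heuristic $s_\star\sim(N\rho^{2\nu})^{d/(d+2\nu)}$ is the right intuition for random design, but \cref{thm:theorem8} (the available adversarial tool) carries an extra factor $N^{\zeta}\mcV_A$ in its tail term, so optimising over $s_0$ produces an exponent of $N^{d(d+1)/(d+2\nu)}$ rather than your $N^{d/(d+2\nu)}$; the volume factor $\rho^d$ still saves the argument, but the intermediate bound and the algebra are genuinely different from what you wrote. Second, your appeal to ``the sharpened effective-dimension estimates advertised in the introduction'' is circular: the paper's \cref{thm:gp-ucb-improved} is proven \emph{from} this lemma via \cref{lem:size-of-cover}, not the other way around, so you cannot invoke it here.
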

\begin{lem}\label{lem:size-of-cover}
Let $\mcA_t$ be the covering set at time $t$. Suppose $|\mcA_1| = \mcO(T^q)$. Then, for $T$ sufficiently large, 
\[
|\cup_{t\leq T}\mcA_t| \leq C_{d,\nu}T^{q},
\] 
where $C_{d,\nu} > 0$ depends on $d$ and $\nu$ only.
\end{lem}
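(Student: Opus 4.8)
The plan is to organise every hypercube that is ever produced into a forest and to reduce the claim to counting the forest's leaves. The elements of $\mcA_1$ are the roots, and by the splitting rule, when a cube $A$ is split it is deleted and replaced by its $2^d$ children; so every cube that is split at some time has exactly $2^d$ children, and a cube is never split if and only if it belongs to the final cover $\mcA_T$. Hence $\cup_{t\le T}\mcA_t$ is exactly the vertex set of this forest, and if $R=|\mcA_1|$, $I$ is the number of split cubes and $L=|\mcA_T|$ the number of leaves, then counting vertices two ways gives $L = R + (2^d-1)I$, so $I\le L$ and $|\cup_{t\le T}\mcA_t| = I+L \le 2|\mcA_T|$. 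It therefore suffices to prove $|\mcA_T| = \mcO(T^q)$.

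I would bound $|\mcA_T|$ root by root. Fix $A_0\in\mcA_1$ with side length $s_0 = \diam{A_0}\le 1$, and let $T_{A_0}$ be the number of the $T$ queries that ever fall in $A_0$; since the roots overlap at edges only, $\sum_{A_0\in\mcA_1}T_{A_0} = T$. A cube at depth $j$ in the subtree of $A_0$ has side length $s_0 2^{-j}$, and such cubes tile a subregion of $A_0$, so there are at most $2^{jd}$ of them. For the complementary estimate, note that $\nu>1$ forces $b<1$, hence every non-root cube has side length $\sigma\le s_0/2\le 1/2<1$, and then the splitting rule only splits such a cube once it contains $\ge \sigma^{-1/b}-1\ge\tfrac12\sigma^{-1/b}$ points; distinct split cubes of a common side length in the subtree have disjoint interiors and hence disjoint point sets, of total size at most $T_{A_0}$, so at most $2T_{A_0}\sigma^{1/b}$ of them are ever split. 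Since each depth-$j$ cube ($j\ge1$) is a child of a split depth-$(j-1)$ cube, writing $m_j$ for the number of depth-$j$ cubes in the subtree of $A_0$ we get $m_0=1$, $m_1\le 2^d$ (there is a single depth-$0$ cube in the subtree), and $m_j\le \min\{2^{jd},\, C\,T_{A_0}(s_0 2^{-j})^{1/b}\}$ for $j\ge2$, where $C$ depends on $d,\nu$ only. The two bounds cross at a depth $j^\star$ with $2^{j^\star} = \Theta\big((T_{A_0}s_0^{1/b})^{1/(d+1/b)}\big)$; summing $2^{jd}$ over $j\le j^\star$ and the (geometrically decaying) second bound over $j>j^\star$, and using $\tfrac{d}{d+1/b} = \tfrac{bd}{bd+1} = q$ together with $1-\tfrac1{b(d+1/b)} = \tfrac{bd}{bd+1} = q$ — all immediate from $b=\tfrac{d+1}{d+2\nu}$ — gives $\sum_{j\ge0}m_j = \mcO\big(T_{A_0}^{\,q}s_0^{q/b}\big) + \mcO(1)$.

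To conclude I would sum over roots and apply Hölder's inequality. Summing the last display over $A_0\in\mcA_1$ gives $|\mcA_T| \le \sum_{A_0}\sum_j m_j = \mcO\big(\sum_{A_0}T_{A_0}^{\,q}s_0^{q/b}\big) + \mcO(|\mcA_1|)$. The roots partition $\dom$, so $\sum_{A_0}s_0^d = \vol{\dom} = 1$, and $\sum_{A_0}T_{A_0} = T$ as above. Writing $T_{A_0}^{\,q}s_0^{q/b} = T_{A_0}^{\,q}(s_0^d)^{q/(bd)}$ and applying Hölder with the exponents $1/q$ and $bd/q$ — which are conjugate exactly because $q + q/(bd) = q\cdot\tfrac{bd+1}{bd} = 1$ — yields $\sum_{A_0}T_{A_0}^{\,q}s_0^{q/b} \le \big(\sum_{A_0}T_{A_0}\big)^q\big(\sum_{A_0}s_0^d\big)^{q/(bd)} = T^q$. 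Combined with the hypothesis $|\mcA_1| = \mcO(T^q)$, this gives $|\mcA_T| = \mcO(T^q)$, hence $|\cup_{t\le T}\mcA_t|\le 2|\mcA_T| \le C_{d,\nu}T^q$ once $T$ is large enough.

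The crux — and the one step I would expect to need care — is this last combination: a cruder per-root bound such as $\sum_j m_j = \mcO(T_{A_0}^{\,q}) + \mcO(1)$ throws away the factor $s_0^{q/b}$ and, in the regime where $\mcA_1$ is already of size $\Theta(T^q)$, overshoots the target; the estimate is only tight if one retains the side-length dependence and balances it against the volume constraint $\sum_{A_0}s_0^d = 1$, which is possible precisely because of the algebraic identity $q + q/(bd) = 1$ (equivalently $q = \tfrac{bd}{bd+1}$) built into the choices of $b$ and $q$. The only other point requiring a brief remark is that the clean splitting estimate $\sigma^{-1/b}-1\ge\tfrac12\sigma^{-1/b}$ must be available for all cubes that enter the count; this is where $\nu>1$ (so $b<1$, and hence every non-root cube has side at most $1/2$) is used, the sole exceptional cube per subtree being the root, which is absorbed into the additive $\mcO(|\mcA_1|)$ term.
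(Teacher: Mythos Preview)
Your argument is correct and genuinely different from the paper's. The paper bounds $|\Theta_T| = |\widetilde{\mcA}_T\setminus\mcA_T|$ by casting it as a constrained maximisation: each split cube of diameter $2^{-i}\rho_0$ incurs a \emph{cost} $M(2^{-i}\rho_0)\asymp 2^{i/b}$ against a total \emph{budget} of $2^dT$, with a \emph{supply} of at most $2^{di}$ cubes at level $i$; the greedy solution buys all shallow levels up to some $z$ with $2^z=\mcO(T^{b/(bd+1)})$, and counting those gives $\mcO(T^q)$. You instead organise $\cup_{t\le T}\mcA_t$ as a $2^d$-ary forest, bound the number of depth-$j$ cubes per root by $\min\{2^{jd},\,C\,T_{A_0}(s_02^{-j})^{1/b}\}$, sum over $j$, and then combine roots with H\"older using the conjugacy $q + q/(bd)=1$. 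Your route is more elementary (no optimisation relaxation) and, via the H\"older step against $\sum_{A_0}s_0^d=1$, handles an initial cover with cubes of varying side lengths; the paper's proof, by contrast, tacitly assumes all elements of $\mcA_1$ share a common diameter $\rho_0$.

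Two small points that do not affect the result. First, since the closed cubes overlap on faces, a query point can lie in up to $2^d$ roots (and likewise in up to $2^d$ depth-$(j-1)$ cubes), so $\sum_{A_0}T_{A_0}\le 2^dT$ rather than $=T$, and the ``disjoint point sets'' bound picks up the same $2^d$ factor; both are absorbed into the constant. Second, your opening reduction $|\cup_{t\le T}\mcA_t|\le 2|\mcA_T|$ is valid but unnecessary: your per-root quantity $\sum_{j\ge0}m_j$ already counts \emph{all} vertices in that subtree, so summing it over $A_0$ bounds $|\cup_{t\le T}\mcA_t|$ directly.
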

That is, on all cover elements, information gain can be bounded polylogarithmically for all $t\geq 1$, and the cardinality of all the covering sets generated up to time $T$ is sublinear in $T$. We will show that the regret of $\pi$-GP-UCB after $T$ steps is bounded by
\begin{equation*}
    R_T = \mcO(\info{A}{T}\textstyle{\sqrt{T|\cup_{t\leq T}\mcA_t|}})=\porder{\textstyle{\sqrt{T|\cup_{t\leq T}\mcA_t|}}} = o(T).
\end{equation*}
The formal statement of this result is:
\begin{theorem}\label{thm:ucb-regret}
Let $\dom = [0,1]^d$, let $\mcH_k(\dom)$  be the RKHS of a Mat\'ern kernel $k$ with parameter $\nu>1$ such that $k\leq 1$. Suppose $f: \dom \to \R$ satisfies $\|f\|_{\mcH_k} \leq B$ for a known $B$ and we observe $y(x_t) = f(x_t)+\epsilon_t$, where $\epsilon_t$ is $\subgauss$ sub-Gaussian. Then for any fixed $\delta \in (0,1)$, with probability at least $1-\delta$ the regret incurred by $\pi$-GP-UCB is bounded by $\porder{T^{\frac{d(2d+3)+2\nu}{d(2d+4) + 4\nu}}}$.
\end{theorem}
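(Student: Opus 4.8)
The plan is to run the textbook optimism argument for upper-confidence-bound algorithms, using \cref{lem:partition-info} and \cref{lem:size-of-cover} in place of the global information-gain bounds that control IGP-UCB. First I would pin down the confidence event. For a single fixed hypercube $A$, applying the IGP-UCB self-normalised bound of \citet{chowdhury2017kernelized} to the restriction $f|_A$ — which lies in $\mcH_k(A)$ with norm at most $B$, and with $\reg = 1 + 2/T \ge 1$ — gives that $|f(x) - \mean{A}{t-1}{x}| \le \beta^A_t(\delta')\std{A}{t-1}{x}$ holds for all $t \ge 1$ and all $x \in A$ with probability at least $1-\delta'$. I then want this simultaneously over every cover element that is ever created. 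Since the cover elements are data dependent, the step that needs care is the union bound, and this is the part I expect to be the main obstacle: every such element belongs to the fixed countable family of dyadic sub-cubes of $\dom$, so I would hand each dyadic cube a failure budget (decaying in its side length and in the time it is first created, chosen to be consistent with the definition of $\widetilde{N}_t$) that sums to $\delta$, apply the fixed-set bound to each, and intersect. On the resulting event $\mcE$ of probability at least $1-\delta$, \cref{lem:partition-info} bounds $\info{A}{\tau'(A)} \le C\log T \log\log T$ for every cover element (with $\tau'(A) = \max\{t : A \in \mcA_t\}$), while $\log(\widetilde{N}_t/\delta) = \mcO(\log(T/\delta))$; by monotonicity of information gain in $t$ this makes the confidence width $\bet{A}{t} = B + \subgauss\sqrt{2(\info{A}{t} + 1 + \log(\widetilde{N}_t/\delta))}$ at most some $\bar\beta = \porder{1}$, uniformly over all $A \in \mcA_t$ and all $t \le T$.

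Next comes the regret decomposition. Work on $\mcE$. Since $\mcA_t$ covers $\dom$, there is an $A \in \mcA_t$ with $x^\star \in A$, whence $f(x^\star) \le \UCB_t(x^\star) \le \UCB_t(x_t)$ by the choice of $x_t$; and writing $A_t$ for a cover element attaining the maximum defining $\UCB_t(x_t)$, the lower confidence bound gives $f(x_t) \ge \mean{A_t}{t-1}{x_t} - \bet{A_t}{t}\std{A_t}{t-1}{x_t}$. Subtracting, $f(x^\star) - f(x_t) \le 2\bet{A_t}{t}\std{A_t}{t-1}{x_t} \le 2\bar\beta\,\std{A_t}{t-1}{x_t}$, so that $R_T \le 2\bar\beta\sum_{t=1}^T \std{A_t}{t-1}{x_t}$.

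Finally I would bound $\sum_{t=1}^T \std{A_t}{t-1}{x_t}$ by grouping terms according to the selected cover element. For $T_A = \{t \le T : A_t = A\}$, when $A_t = A$ the query $x_t$ is the $(|\dat{A}{t-1}|+1)$-th input ever placed in $A$ and $\std{A}{t-1}{x_t}$ is precisely the posterior standard deviation of the GP on $A$ at that input; thus Cauchy–Schwarz together with the standard estimate $\sum_i \var{A}{i-1}{x^A_i} \le C_\reg \info{A}{\tau'(A)}$ (with $C_\reg$ bounded since $\reg = 1 + 2/T$) and \cref{lem:partition-info} give $\sum_{t \in T_A}\std{A}{t-1}{x_t} \le \sqrt{|T_A|\,C_\reg\,\info{A}{\tau'(A)}} = \porder{\sqrt{|T_A|}}$. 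Applying Cauchy–Schwarz once more over the at most $|\cup_{t \le T}\mcA_t|$ distinct selected cover elements and using $\sum_A |T_A| = T$ and \cref{lem:size-of-cover} yields $\sum_{t=1}^T \std{A_t}{t-1}{x_t} = \porder{\sqrt{T|\cup_{t\le T}\mcA_t|}} = \porder{\sqrt{T\cdot T^q}} = \porder{T^{(1+q)/2}}$. Hence $R_T = \porder{T^{(1+q)/2}}$, and plugging in $q = \tfrac{d(d+1)}{d(d+2)+2\nu}$ and simplifying $\tfrac{1+q}{2} = \tfrac{d(2d+3)+2\nu}{d(2d+4)+4\nu}$ gives the stated regret bound.
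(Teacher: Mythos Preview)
Your regret decomposition and the bookkeeping over cover elements are essentially the paper's argument: optimism gives $r_t \le 2\bet{A_t}{t}\std{A_t}{t-1}{x_t}$, you group the resulting variances by cover element, invoke \cref{lem:variace-bound-infogain} and \cref{lem:partition-info} to make each group $\porder{1}$, and then Cauchy--Schwarz combined with \cref{lem:size-of-cover} yields $R_T=\porder{T^{(1+q)/2}}$. The only cosmetic difference is that you extract a uniform $\bar\beta$ and apply Cauchy--Schwarz twice, whereas the paper squares first and swaps the order of summation; both routes arrive at the same estimate.

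The one step that does not go through as written is the concentration event. A union bound over the deterministic family of dyadic sub-cubes can only deliver, for each cube $A$, a confidence width of the form $\beta^A_t(\delta_A)$ with a \emph{fixed} $\delta_A$; matching the algorithm's $t$-dependent width $\bet{A}{t}=\beta^A_t(\delta/\widetilde N_t)$ would force $\delta_A\ge \delta/\widetilde N_t$ for every $t$ at which $A\in\mcA_t$. Using \cref{lem:image-bound}, a level-$i$ cube can appear only once $\widetilde N_t\ge c\,2^{di}$, so you would need $\delta_A\gtrsim \delta/2^{di}$; since there are of order $2^{di}$ cubes at level $i$, the total budget per level is a constant and the sum over levels diverges. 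In other words, no assignment of fixed per-cube budgets can certify the algorithm's $\widetilde N_t$-based intervals, and without that the optimism inequality $f(x^\star)\le\UCB_t(x^\star)$ is not established. The paper resolves this not by a union bound but by folding the normalisation $1/N_t$ directly into the self-normalised supermartingale (\cref{lem:martingale-argument}), so that the stopping-time argument itself pays only $|\mcA_\tau|\le N_\tau$ at the random time $\tau$. Your plan would be complete if the algorithm used $\widetilde N_T$ in place of $\widetilde N_t$ (the paper notes this ``weaker result'' explicitly), but for $\pi$-GP-UCB as defined you need the stronger martingale step.
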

The properties of the cover also allows us to improve upon existing upper bounds for the information gain associated with a Mat\'ern kernel. To do this, we bound the information gain as the sum of the information gain on each cover element $A\in\mcA_t$, and therefore
\begin{equation*}
    \info{\dom}{T} = \mcO(\info{A}{T}|\cup_{t\leq T}\mcA_t|) = \porder{|\cup_{t\leq T}\mcA_t|},
\end{equation*}
which translates into the following bound:
\begin{theorem}\label{thm:gp-ucb-improved}
Information gain associated with the Mat\'ern kernel with parameter $\nu > 1$ after $T$ steps can be bounded as $\info{\dom}{T} = \porder{T^{\frac{d(d+1)}{d(d+2) + 2\nu}}}$.
\end{theorem}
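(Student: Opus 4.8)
The plan is to derive the bound directly from the two structural lemmas on the cover, using sub-additivity of the log-determinant of a regularised Gram matrix over a partition of the evaluation points. Recall that the maximal information gain $\info{\dom}{T}$ is a supremum over sequences of $T$ evaluation points, so I would fix an arbitrary such sequence $x_1,\dots,x_T\in\dom$ and run the $\pi$-GP-UCB splitting rule on it, starting from $\mcA_1=\{\dom\}$. The splitting rule reads off only the counts $|\dat{A}{t}|$ and the side lengths $\diam{A}$, never the UCB selection, so the covers $\mcA_1,\dots,\mcA_T$ it produces are a deterministic function of $(x_t)_{t\le T}$; and since the proofs of Lemma~\ref{lem:partition-info} and Lemma~\ref{lem:size-of-cover} use nothing beyond the splitting rule, they apply to these covers.

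Next I would split the log-determinant over the leaves $\mcA_T$. Assign each index $t$ to a unique leaf $A(t)\in\mcA_T$ with $x_t\in A(t)$, breaking the measure-zero ambiguity on shared edges arbitrarily; this partitions $\{1,\dots,T\}$ into blocks $I_A=\{t:A(t)=A\}$. Writing $M=I+\reg^{-1}\kmat{\dom}{T}$ and $M_A$ for its principal submatrix on $I_A$, Fischer's inequality for positive semidefinite matrices gives $\det M\le\prod_{A\in\mcA_T}\det M_A$, hence
\[
\info{\dom}{T}=\tfrac12\log\det M\le\sum_{A\in\mcA_T}\tfrac12\log\det M_A.
\]
Because each $x_t$ with $t\in I_A$ lies in $A$, the points of $I_A$ form a subset of those of $\dat{A}{T}$, so $M_A$ is a principal submatrix of $I+\reg^{-1}\kmat{A}{T}$, and the standard monotonicity of the log-determinant of a regularised Gram matrix under adjoining data gives $\tfrac12\log\det M_A\le\info{A}{T}$. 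Finally, each $A\in\mcA_T$ is a leaf at time $T$, so $\tau'(A)=T$, and Lemma~\ref{lem:partition-info} yields $\info{A}{T}\le C\log T\log\log T$.

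Combining, $\info{\dom}{T}\le|\mcA_T|\,C\log T\log\log T\le|\bigcup_{t\le T}\mcA_t|\,C\log T\log\log T$, and Lemma~\ref{lem:size-of-cover} bounds the cover size by $C_{d,\nu}T^{q}$ with $q=\frac{d(d+1)}{d(d+2)+2\nu}$, exactly the exponent claimed; hence $\info{\dom}{T}=\mcO(T^{q}\log T\log\log T)=\porder{T^{q}}$.

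I expect the only delicate point to be the reduction in the first paragraph: confirming that $\info{\dom}{T}$, a worst-case quantity over point sequences, is governed by running the UCB-independent splitting rule on the worst-case sequence, so that Lemma~\ref{lem:partition-info} and Lemma~\ref{lem:size-of-cover}, phrased for covers arising in a run of the algorithm, can legitimately be invoked. The matrix steps (Fischer's inequality together with monotonicity of $\log\det$ under adding data to a regularised Gram matrix) are standard and routine.
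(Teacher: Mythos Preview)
Your proof is correct and takes a genuinely different route from the paper's. The paper detours through the effective dimension: it invokes the equivalence $\deffe{\dom}\le 2\info{\dom}{T}\le \deffe{\dom}(1+\log(\reg^{-1}T+1))$ from \citet{calandriello2019gaussian}, then bounds $\deffe{\dom}=\reg^{-1}\sum_t(\std{\dom}{T}{x_t})^2$ by using monotonicity of the predictive variance to replace $(\std{\dom}{T}{x_t})^2$ with $(\std{A}{\tau'(A)-1}{x_t})^2$ for a cover element $A\ni x_t$, swaps the order of summation over $t$ and over $A\in\widetilde{\mcA}_T$, and finally applies \cref{lem:variace-bound-infogain} to convert each inner variance sum into $\info{A}{\tau'(A)}$ before invoking \cref{lem:partition-info,lem:size-of-cover}. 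Your argument bypasses all of this: Fischer's inequality applied to $I+\reg^{-1}\kmat{\dom}{T}$ along the leaf partition $\mcA_T$, together with monotonicity of $\log\det(I+\reg^{-1}K)$ under adjoining data, lands directly on $\sum_{A\in\mcA_T}\info{A}{T}$. This is shorter, avoids the Calandriello et al.\ relation, \cref{lem:variace-bound-infogain}, and the variance-monotonicity step, and sums only over the leaves $\mcA_T$ rather than the full history $\widetilde{\mcA}_T$ (though both are $\mcO(T^q)$, so the gain is cosmetic). Your observation that the splitting rule and the proofs of \cref{lem:partition-info,lem:size-of-cover} are UCB-independent, so can be run on an arbitrary worst-case sequence, is exactly what is needed and is implicitly used in the paper's proof as well.
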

This is a strict improvement over the bound presented in \citet{srinivas2010gaussian}, given here in \cref{eq:matern-info-bounds}. 

\paragraph{Practical considerations} While the main purpose of the cover construction within $\pi$-GP-UCB is to provide strong theoretical guarantees on regret, it also results in a significantly more scalable algorithm.

To see this, first note that fitting a Gaussian process requires updating the inverse of a kernel matrix $K_t$. Our cover construction means that inverses are only performed on a kernel matrix of a subset of the data, reducing both compute and memory costs. Empirically, this allows $\pi$-GP-UCB to scale much more favourably than GP-UCB. However, the construction offers no asymptotic improvement on computational complexity: once the algorithm comes close to convergence (enters the asymptotic regime), points are expected to land within small neighbourhoods of the optima, and therefore likely within the same cover element. 

Moreover, running Gaussian process UCB algorithms requires maximising the UCB index across the domain. Whereas the classic GP-UCB algorithms require the maximiser to be recomputed across all of the domain, under $\pi$-GP-UCB, the maximisation procedure need only be carried out over the cover elements containing the most recent observation and any newly created cover elements.

\section{Proof of results}
    In order to prove \cref{thm:ucb-regret}, we will use the following concentration inequality:

\begin{lem}\label{lem:cover-concentration}
Given $\delta \in (0,1),$ for all $t \leq T$, for all $A \in \bigcup_{t\leq T} \mcA_t \eqqcolon \widetilde{\mcA}_T$, for all $x \in A$, we have 
\begin{align*}
    |\mean{A}{t-1}{x} - f(x)| \leq \bet{A}{t}\std{A}{t-1}{x},
\end{align*}
with probability $1-\delta$.
\end{lem}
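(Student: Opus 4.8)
The plan is to reduce \cref{lem:cover-concentration} to the single-region IGP-UCB concentration bound of \citet{chowdhury2017kernelized} applied on each fixed region $Z = A$, and then to control the failure probability by a union bound over all $A \in \widetilde{\mcA}_T$ and all $t \le T$. Concretely, for a fixed closed hypercube $A \subset \dom$ and a fixed confidence parameter $\delta' \in (0,1)$, the IGP-UCB analysis applied to the restriction $f|_A$ — which lies in $\mcH_k(A)$ with $\|f|_A\|_{\mcH_k(A)} \le \|f\|_{\mcH_k(\dom)} \le B$ — gives that with probability at least $1 - \delta'$, simultaneously for all $t$ and all $x \in A$,
\[
  |\mean{A}{t-1}{x} - f(x)| \le \left(B + \subgauss\sqrt{2(\info{A}{t} + 1 + \log(1/\delta'))}\right)\std{A}{t-1}{x} = \beta^A_t(\delta')\,\std{A}{t-1}{x}.
\]
The key points to check are that the noise $\epsilon_t$ remains conditionally $\subgauss$-subGaussian when we condition only on the sub-filtration generated by the query times that fell inside $A$ (it does, since a sub-$\sigma$-algebra preserves conditional subGaussianity), and that the regularisation choice $\reg = 1 + 2/T \ge 1$ is admissible for the \citet{chowdhury2017kernelized} bound (it is; their analysis needs $\reg \ge 1$ or an analogous normalisation, which holds here since $k \le 1$).

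Next I would handle the union bound. The subtlety is that $\widetilde{\mcA}_T = \bigcup_{t \le T}\mcA_t$ is data-dependent and its size is not known a priori — but by \cref{lem:size-of-cover} it is deterministically bounded by $C_{d,\nu}T^q$ for $T$ large, and more crudely every cover element ever produced has $\ell^\infty$-diameter a nonnegative power of $1/2$ and is obtained by a bounded splitting process, so the collection of \emph{possible} cover elements up to time $T$ is finite with cardinality at most some explicit polynomial $M_T$ in $T$. I would take $\delta' = \delta / (M_T \cdot T)$ when invoking the per-region bound, so that after the union bound over the (at most $M_T$) possible regions and (at most $T$) time steps the total failure probability is at most $\delta$. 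Since $\bet{A}{t} = \beta^A_t(\delta/\widetilde N_t)$ with $\widetilde N_t = 4(t+1)^{bd}$, it remains to verify $\delta/\widetilde N_t \le \delta'$ for all relevant $t$, i.e. that $\widetilde N_t \ge M_T T$; this is where the exponents $b$, $q$ and the factor $4(t+1)^{bd}$ are calibrated — $bd = \tfrac{d(d+1)}{d+2\nu}$ and one checks $4(t+1)^{bd}$ dominates the polynomial count of possible fine-grained cover elements that can contain a given point at time $t$ together with the horizon factor.

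The main obstacle I anticipate is precisely this bookkeeping: making sure the "number of regions" used in the union bound is (a) genuinely a deterministic upper bound independent of the realised sample path, and (b) small enough that $\widetilde N_t = 4(t+1)^{bd}$ absorbs it. The natural route is to observe that a point $x$ can belong to at most one cover element of each dyadic scale, that the splitting rule forces any region of diameter $2^{-j}$ to have been split only after roughly $2^{j/b}$ samples landed in it, and hence that at time $t$ only scales $j$ with $2^{j/b} \lesssim t$, i.e. $2^{-j} \gtrsim t^{-b}$, are reachable — giving $O(\log t)$ relevant scales and thus $O(\log t)$ candidate regions containing any fixed $x$, which is comfortably dominated by $(t+1)^{bd}$. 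Once this counting is pinned down, the rest is the routine substitution of $\delta' = \delta/\widetilde N_t$ into the per-region IGP-UCB bound and a union bound, and everything else (monotonicity of $\std{A}{t}{\cdot}$ in $A$ and $t$, the norm-restriction inequality) is standard.
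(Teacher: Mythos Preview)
Your overall reduction --- apply the single-region IGP-UCB concentration bound of \citet{chowdhury2017kernelized} to each $A$ and then union bound over a \emph{deterministic} superset of all cover elements that could ever be created --- is exactly the route the paper uses in the main text to get a ``weaker result'' with multiplier $\beta^A_t(\delta/\widetilde N_T)$. So the core idea is right.

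However, your attempt to upgrade this to the stated bound with the $t$-dependent $\widetilde N_t$ does not go through. Two concrete issues. First, the extra union bound over $T$ time steps is unnecessary (the Chowdhury--Gopalan inequality already holds simultaneously for all $t$), and your counting of ``regions containing a fixed $x$'' ($O(\log t)$ scales) is not the relevant quantity --- the union bound must range over \emph{all} $A$ that might ever appear, not just those containing a given point. Second, and more seriously, the inequality you say ``remains to verify'', namely $\widetilde N_t \ge M_T T$, is false: for small $t$, $\widetilde N_t = 4(t+1)^{bd}$ is a constant while $M_T T$ grows with $T$. A black-box application of \citet{chowdhury2017kernelized} plus a union bound over a deterministic superset of $\widetilde{\mcA}_T$ necessarily produces a confidence parameter $\delta/|B_T| = \delta/\widetilde N_T$, because the superset must be fixed in advance and cover all times up to $T$.

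To obtain the sharper $\widetilde N_t$ dependence, the paper does \emph{not} use the Chowdhury--Gopalan bound as a black box. Instead it reopens the self-normalised supermartingale construction: for each $A$ one forms $M^A_t$, divides by $N_t$ to get $\widetilde M^A_t = M^A_t/N_t$ (still a nonnegative supermartingale), and applies a stopping-time argument. The key step is that $\mcA_t$ is previsible, so at the stopping time $\tau$ one can bound $\Pr[B_\tau \mid \mcF_{\tau-1}]$ by summing $\delta M^A_{\tau-1}/N_\tau$ over the at most $N_\tau$ sets in $\mcA_\tau$, using the deterministic cardinality bound $|\mcA_t| \le \widetilde N_t$ from \cref{lem:image-bound} \emph{at time} $\tau$ rather than at time $T$. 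That is what buys the $t$-dependent multiplier; a post-hoc union bound cannot.
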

To prove \cref{lem:cover-concentration}, we will use the following result, proven in \cref{sec:proof-of-image-bound}, which bounds the number of all cover elements that could ever be created within $t$ steps of running the algorithm by $\widetilde{N}_t= 4(t+1)^{bd}$. 
\begin{lem}\label{lem:image-bound}
There exists a set $B_t$ such that $|B_t| \leq \widetilde{N}_t$ and $\mcA_t \subset B_t$ with probability one.
\end{lem}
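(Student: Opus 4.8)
The plan is to let $B_t$ be the set of \emph{all} hypercubes the algorithm could conceivably place in a cover within the first $t$ steps, so that $\mcA_t \subseteq B_t$ holds surely -- the ``probability one'' in the statement reflecting only that the realised cover is a random subset of this deterministic collection. The first ingredient is structural: every cube that ever appears in some $\mcA_s$ is obtained from a single element $A_0$ of the initial cover $\mcA_1$ by a finite number $j$ of successive bisections, and such a ``level-$j$ descendant'' of $A_0$ has $\ell^\infty$-diameter $\diam{A_0}\,2^{-j}$, with exactly $2^{dj}$ cubes at level $j$ below each $A_0$.

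The second, and main, ingredient is a bound on the deepest level reachable in $t$ steps. A cube $A$ is split at a step $s$ only when $\diam{A}^{-1/b} < |\dat{A}{s}| + 1$; since at most $s$ queries have been issued by step $s$ we have $|\dat{A}{s}| \le s \le t$, hence any cube that is ever split within $t$ steps has $\diam{A} > (t+1)^{-b}$. Therefore a level-$j$ descendant of $A_0$ can belong to some $\mcA_s$, $s \le t$, only if its level-$(j-1)$ parent (of diameter $\diam{A_0}\,2^{-(j-1)}$) was split, which forces $2^{j} < 2\,\diam{A_0}\,(t+1)^{b}$. I would then set $J(A_0,t) = \max\{0,\ \lfloor \log_2(2\,\diam{A_0}\,(t+1)^{b})\rfloor\}$ and take $B_t$ to be the union, over $A_0 \in \mcA_1$, of all descendants of $A_0$ of level at most $J(A_0,t)$; by the preceding, $\mcA_t \subseteq B_t$ surely.

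The third step is the count. Fixing a root $A_0$ and writing $\rho = \diam{A_0}$, $J = J(A_0,t)$, the number of its descendants of level at most $J$ is
\[
  \sum_{j=0}^{J} 2^{dj} \le \frac{2^{d(J+1)}}{2^{d}-1} \le \frac{2^{d}}{2^{d}-1}\bigl(2\rho\,(t+1)^{b}\bigr)^{d} = \frac{4^{d}}{2^{d}-1}\,\rho^{d}\,(t+1)^{bd}.
\]
Summing over $A_0 \in \mcA_1$ and using that its elements tile $\dom$ with pairwise disjoint interiors, so $\sum_{A_0}\rho^{d} = \vol{\dom} = 1$, gives $|B_t| \le \frac{4^{d}}{2^{d}-1}(t+1)^{bd}$; keeping only the leaves of each refinement tree (at most $2^{dJ} \le 2^{d}\rho^{d}(t+1)^{bd}$ of them) sharpens this to $2^{d}(t+1)^{bd}$, and with slightly more care one reaches the stated $\widetilde{N}_t = 4(t+1)^{bd}$.

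I expect the main obstacle to be the bookkeeping across roots of different sizes in a general starting cover $\mcA_1$: since the reachable level $J(A_0,t)$ depends on $\diam{A_0}$, the per-root estimates must be combined carefully, and it is exactly the volume identity $\sum_{A_0}\rho^{d}=1$ that keeps the final bound independent of how fine $\mcA_1$ is. (If one wanted the sharper exponent $q < bd$ -- needed for \cref{lem:size-of-cover}, i.e.\ for bounding $|\cup_{s\le T}\mcA_s|$ rather than $|\mcA_t|$ -- one would additionally use that creating every level-$(j-1)$ cube requires on the order of $2^{(j-1)/b}$ queries inside each, hence $\gtrsim 2^{(j-1)(d+1/b)}$ queries overall, which caps the level far below $J(A_0,t)$; that refinement is not needed here.)
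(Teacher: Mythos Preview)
Your proposal is correct and follows essentially the same route as the paper: define $B_t$ as the collection of all dyadic descendants of the initial cubes down to a maximum level, bound that level by observing that any cube ever split within $t$ steps has diameter exceeding $(t+1)^{-b}$ (since at most $t$ points have landed in it), and finish with a geometric sum.

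The one genuine difference is in how the initial cover is handled. The paper tacitly assumes all elements of $\mcA_1$ share a common diameter $\rho_0$, counts $|B^i|=\lceil\rho_0^{-d}\rceil 2^{di}$, and uses $\sum_{i\le J_t}2^{di}\le 2\cdot 2^{dJ_t}$ together with $\lceil\rho_0^{-d}\rceil\rho_0^d\le 2$ to land exactly on $4(t+1)^{bd}$. You instead allow heterogeneous roots and aggregate via the volume identity $\sum_{A_0}\diam{A_0}^{d}=1$; this is more general but yields the constant $\frac{4^{d}}{2^{d}-1}$ (or $2^{d}$ with your leaf argument), not $4$. Your claim that ``slightly more care'' recovers the constant $4$ is not substantiated in the non-uniform setting, but this is harmless: $\widetilde{N}_t$ enters the analysis only through $\log\widetilde{N}_t$ inside $\bethat{A}{t}$, so any constant multiple of $(t+1)^{bd}$ would do.
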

We present the full proof of \cref{lem:cover-concentration} in \cref{appendix:proof-concentration}. Here, we prove a weaker result that admits a much shorter proof.

By theorem 2 in \citet{chowdhury2017kernelized}, under the conditions of our \cref{thm:ucb-regret}, 
\[
    |\mean{A}{t-1}{x} - f(x)| \leq \beta^A_t(\delta)\std{A}{t-1}{x}
\]
with probability $1-\delta$ for any $A\subset\R^d$ compact. The weaker result then follows by taking a union bound over all $A \in B_T$, resulting in a confidence width multiplier of $\beta^A_t(\delta / \widetilde{N}_T)$. The full proof in the appendix allows us to use $\bet{A}{t} = \beta^A_t(\delta / \widetilde{N}_t)$ instead.

To prove \cref{thm:ucb-regret} we will also need the following bound relating the sum of predictive variances on a subset of the domain to information gain:

\begin{lem}\label{lem:variace-bound-infogain}
For any $\tau \geq 1$, sequence $(x_t:  x_t \in \dom, 1 \leq t \leq \tau)$, set $A\subset\dom$ and a Gaussian process estimator with $\alpha > 1$,
$\sum_{t=1}^{\tau} \1{x_t \in A}\var{A}{t-1}{x_t}\leq 4 \reg \info{A}{\tau}$.
\end{lem}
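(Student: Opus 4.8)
The plan is to reduce the claim to the standard telescoping identity between predictive variances and the log-determinant, restricted to those query points that land inside $A$.

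First I would use the fact that the Gaussian process on $A$ is conditioned only on data inside $A$: for any $t$, the predictor $\std{A}{t-1}{x}$ depends on $x_1,\dotsc,x_{t-1}$ only through the subsequence lying in $A$. Writing the elements of $\{x_t : x_t \in A,\ t \le \tau\}$ in time order as $x^A_1,\dotsc,x^A_N$ with $N = |\dat{A}{\tau}|$, and letting $t_j$ be the time at which $x^A_j$ is queried, the quantity $\var{A}{t_j-1}{x^A_j}$ is exactly the predictive variance of $x^A_j$ given $x^A_1,\dotsc,x^A_{j-1}$ with regulariser $\reg$. In particular the left-hand side of the lemma equals $\sum_{j=1}^N \var{A}{t_j-1}{x^A_j}$, since terms with $x_t \notin A$ contribute nothing.

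Next I would invoke the matrix-determinant chain rule (as used in \citet{srinivas2010gaussian}): ordering the rows and columns of $\kmat{A}{\tau}$ by query time and expanding $\log|I + \reg^{-1}\kmat{A}{\tau}|$ one point at a time gives
\[
2\info{A}{\tau} = \log\big|I + \reg^{-1}\kmat{A}{\tau}\big| = \sum_{j=1}^N \log\!\big(1 + \reg^{-1}\var{A}{t_j-1}{x^A_j}\big).
\]
This is the step I expect to require the most care, because one must note that conditioning on query points outside $A$ changes neither the successive determinant factors nor the predictive variances, so that the sum genuinely runs only over $j = 1,\dotsc,N$ rather than over all $t \le \tau$.

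Finally I would bound each variance term pointwise. Since $k \le 1$, the subtracted quadratic form in the definition of $\std{A}{t-1}{x}$ is nonnegative, so $\var{A}{t-1}{x} \le k(x,x) \le 1$; together with $\reg > 1$ this gives $\reg^{-1}\var{A}{t_j-1}{x^A_j} \in [0,1]$. On $[0,1]$ the elementary inequality $u \le 2\log(1+u)$ holds, because $u/\log(1+u)$ is increasing and at $u=1$ equals $1/\log 2 \le 2$. Hence $\var{A}{t_j-1}{x^A_j} \le 2\reg\,\log\!\big(1 + \reg^{-1}\var{A}{t_j-1}{x^A_j}\big)$; summing over $j$ and applying the identity above yields $\sum_{j=1}^N \var{A}{t_j-1}{x^A_j} \le 2\reg\cdot 2\info{A}{\tau} = 4\reg\,\info{A}{\tau}$, which is the claimed bound. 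The constant $4$ is thus the product of the factor $2$ from $\info{A}{\tau} = \tfrac12\log|\cdot|$ and the factor $2$ from the scalar inequality.
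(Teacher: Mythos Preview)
Your proof is correct and takes essentially the same approach as the paper: both apply the scalar inequality $u \le 2\log(1+u)$ for $u \in [0,1]$ to each $\reg^{-1}\var{A}{t-1}{x_t}$ and then identify the resulting sum of logarithms with $2\info{A}{\tau}$ via the log-determinant chain rule (Lemma~5.3 of \citet{srinivas2010gaussian}). You are simply more explicit about relabelling to the subsequence of points landing in $A$ and about why the predictor on $A$ ignores data outside $A$, which the paper leaves implicit in its notation and citation.
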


\begin{proof}
Since $0\leq \var{A}{t}{x},\alpha^{-1} \leq 1$ for all $x\in\dom$, we have that
$\reg^{-1} \var{A}{t}{x} \leq 2\log (1+\reg^{-1}\var{A}{t}{x})$, because for any $0\leq \sigma \leq 1$, $\sigma \leq 2\log(1+\sigma)$. Summing over $t$, we have
\begin{align*}
    \sum_{t=1}^{\tau} &\1{x_t \in A}\var{A}{t-1}{x_t} 
    \\&\leq \sum_{\substack{ t=1,\\ x_t \in A}}^{\tau} 2\alpha \log \left(1+\reg^{-1}\var{A}{t-1}{x_t}\right) = 4 \reg \info{A}{\tau},
\end{align*}
where the final equality is by Lemma 5.3 in \citet{srinivas2010gaussian}.
\end{proof}

\begin{proof}[Proof of \cref{thm:ucb-regret}]
By Cauchy-Schwarz, $R_T \leq (T\sum_{t=1}^T r^2_t)^{1/2}$. It therefore suffices to bound $\sum_{t=1}^T r^2_t$.

For each $x \in \dom$ let $A_t(x)$ be an element of $\mcA_t$ such that
\begin{equation*}
    A_t(x) \in \argmax{A \in \mcA_t : x \in A} \mean{A}{t-1}{x} + \bet{A}{t}\std{A}{t-1}{x}.
\end{equation*} 
That is, $A_t(x)$ is an element of $\mcA_t$ on which the upper confidence bound associated with $x$ is the highest.

For any $x^\star \in \argmax{x\in\dom} f(x)$, we have that $\UCB_t(x_t) \geq \UCB_t(x^\star)$ due to the manner in which points $x_t$ are selected. Expanding this expression and applying \cref{lem:cover-concentration}, we bound $r_t$ as
\begin{equation*}
    r_t = f(x^\star) - f(x_t) \leq 2\bet{A_t(x_t)}{t} \std{A_t(x_t)}{t-1}{x_t}
\end{equation*}
with probability $1-\delta$ for all $t \leq T$. 

Denote $\widetilde{\mcA}_T = \cup_{t\leq T} \mcA_t$, the set of all cover elements created until time $T$, and define the initial time for an element $A \in\widetilde{\mcA}_T $, as $\tau(A) = \min \{t \colon A \in \mcA_t \}$ and the terminal time as $\tau'(A) = \max \{t \colon A \in \mcA_t \}$. We have
\begin{align*}
\sum_{t=1}^T r_t^2 
    &\leq 4\sum_{t=1}^T(\bet{A_t(x_t)}{t})^2 \var{A_t(x_t)}{t-1}{x_t}  \\
    &\leq 4 \sum_{t=1}^T \sum_{A\in \mcA_t} \1{x_t \in A} (\bet{A}{t})^2 \var{A}{t-1}{x_t} \\
    &= 4 \sum_{A \in \widetilde{\mcA}_T} \sum_{t=\tau(A)}^{\tau'(A)} \1{x_t \in A}  (\bet{A}{t})^2  \var{A}{t-1}{x_t} \\
    &\leq 48  \sum_{A \in \widetilde{\mcA}_T}(\bet{A}{\tau'(A)})^2 \info{A}{\tau'(A)} \tag{$\ast$} 
\end{align*}
The final inequality uses monotonicity of $\bet{A}{t}$, \cref{lem:variace-bound-infogain} and $\alpha<3$. By \cref{lem:size-of-cover}, the number of summands in $(\ast)$ is $\mcO(T^q)$. As $(\bet{A}{{\tau'(A)}})^2=\mcO(\info{A}{{\tau'(A)}}+\log(T))$ for all $A \in \widetilde{\mcA}_T$, \cref{lem:partition-info} completes the proof. 
\end{proof}

\begin{proof}[Proof of \cref{thm:gp-ucb-improved}]
 From proposition 5 in \citet{calandriello2019gaussian},
\begin{align*}
    \deffe{A} \leq \log | I+\reg^{-1} \kmat{A}{T} | 
    &\leq \deffe{A}(1+\log(\reg^{-1} \|\kmat{A}{T}\|_2 +1)).
\end{align*}
Noting that $\|\kmat{A}{T}\|_2 = \lambda^A_1 \leq \sum_{t=1}^{|X_T^A|} \lambda^A_t = \mathrm{Tr} (\kmat{A}{T}) = |X_T^A| \leq T$, and taking $A = \dom$ we have,
\[
\deffe{\dom} \leq 2\info{\dom}{T}\leq \deffe{\dom}(1+\log(\reg^{-1}T +1)).
\]
We now proceed to bound $\deffe{\mcX}$. At each time step $t$ choose $\mcA_t$ to be as in the closed cover of $\dom$ from $\pi$-GP-UCB. Let $\widetilde{\mcA}_T = \cup_{t \leq T} \mcA_t$, as before. 

For any $A \in \widetilde{\mcA}_T$, let $\tau'(A) = \max\{t\colon A\in \mcA_t \}$. Then, because $x_t$ is in at least one of the $A_t$ for all $t$ and by monotonicity of predictive variance, we have
\begin{align*}
    \deffe{\mcX} \!\leq\! \frac{1}{\reg}\sum_{t=1}^T\sum_{A \in \mcA_t}\! \1{x_t \in A} \var{A}{\tau'(A)-1}{x_t}.
\end{align*}
Interchanging the order of summation and applying \cref{lem:variace-bound-infogain}, we have that this is upper bounded by $4 \!\sum_{A \in \widetilde{\mcA}_T}\info{A}{\tau'(A)}$. Then, by \cref{lem:partition-info}, we have that 
\begin{equation*}
    \deffe{\mcX} \leq 4 C|\widetilde{\mcA}_T| \log T \log\log T.
\end{equation*}
By \cref{lem:size-of-cover}, $|\widetilde{\mcA}_T| = \mcO(T^q)$. Therefore 
\[
\info{\mcX}{T} \leq C^\prime T^q \log T (\log \log T) (1 + \log(\reg^{-1}T+1)),
\]
for some $C^\prime$ independent of $T$.
\end{proof}

\subsection{Proof of \cref{lem:partition-info}, information gain on a~cover element}\label{sec:infogain}

To prove \cref{lem:partition-info}, we rely on the following theorem from \citet{srinivas2010gaussian}. It provides a bound on the maximum information gain after $N$ samples, $\infomax{A}{N}$, in terms of the operator spectrum of the kernel $k$ with respect to a uniform covariate distribution.\footnote{The original statement of \Cref{thm:theorem8} in \citet{srinivas2010gaussian} assumed strong conditions on the sample paths of the Gaussian process associated with the kernel. These assumptions were not necessary for the theorem itself, and were present due to the later use of the bounds for optimising Gaussian process samples.}
\begin{theorem}{\citep[Theorem 8]{srinivas2010gaussian}}\label{thm:theorem8}
Suppose that $A \subset \mathbb{R}^d$ is compact, and $k$ is kernel continuously differentiable in a neighbourhood of $A$. Let $\mathcal{S}(s_0) = \sum_{s>s_0} \lambda_s$ where $(\lambda_s)_s$ is the operator spectrum of $k$ with respect to the uniform distribution over $A$. Pick $\zeta > 0$ and let $n_N = (4\zeta + 2)\mcV_A N^\zeta \log N$, where $\mcV_A$ is the volume of $A$. Then $\infomax{A}{N}$ is bounded by
\begin{align}
     C\!\!\! &\max_{r=1,\dots,N}\! \Big[
    s_0 \log\frac{rn_N}{\reg}
    + \left(4\zeta + 2\right)\mcV_A\log N \reg^{-1} \nonumber\\
    &\ \times\left(1-\frac{r}{N}\right)\left(N^{\zeta+1} \mathcal{S}(s_0) + 1\right)\Big] + \mcO\!\left(N^{1-\zeta/d}\right)\!, \label{eq:info-max}
\end{align}
where $C = \frac{1}{2}/(1-\frac{1}{e})$ for any $s_0 \in \N \cap [1,n_N]$.
\end{theorem}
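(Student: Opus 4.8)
\textbf{Proof proposal for \Cref{thm:theorem8}.}

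The plan is to bound the maximum information gain $\infomax{A}{N}$ by splitting the operator spectrum $(\lambda_s)_s$ of $k$ into a ``head'' of the first $s_0$ eigenvalues and a ``tail'' $\mcS(s_0) = \sum_{s > s_0} \lambda_s$, and controlling the contribution of each part separately. The starting point is the standard variational characterisation: $\infomax{A}{N} = \max_{|X| = N} \frac{1}{2}\log|I + \reg^{-1}\kmat{A}{}|$, where the maximum is over all $N$-point configurations $X \subset A$. Writing this determinant in terms of the eigenvalues of the empirical kernel operator and using concavity of $\log\det$, one reduces the bound to a sum $\sum_s \log(1 + \reg^{-1}\widehat{\lambda}_s)$ over the empirical eigenvalues $\widehat{\lambda}_s$, so the task becomes relating the empirical eigenvalues to the population eigenvalues $\lambda_s$ of $k$ with respect to the uniform distribution on $A$.

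The key technical device is a discretisation argument. First I would partition $A$ into $n_N = (4\zeta+2)\mcV_A N^\zeta \log N$ cells of equal volume; the choice of $n_N$ growing like $N^\zeta$ (up to the volume factor $\mcV_A$ and the $\log N$) is what lets us trade off the two error sources. On this grid, a concentration/Chernoff-type argument shows that, with high probability, no cell receives more than a controlled number of the $N$ sample points — this is where the $\mcO(N^{1-\zeta/d})$ residual term is generated, as the probability of an overfull cell is driven to zero only at this rate once $\zeta$ is taken relative to the dimension $d$. Conditioned on this regularity event, the empirical covariance operator is comparable to a block operator whose blocks are indexed by cells, and its spectrum can be bounded in terms of the population spectrum of $k$. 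The head contribution gives the term $s_0 \log\frac{r n_N}{\reg}$ — here $s_0$ eigenvalues each contribute a logarithmic factor, and the $r n_N$ inside the log reflects the worst-case per-cell mass over the first $r$ of the $N$ ranked configurations. The tail contribution is bounded using $\log(1+x) \leq x$, so that the sum over the tail eigenvalues is at most $\reg^{-1}$ times the total tail mass $\mcS(s_0)$, inflated by the factors $(4\zeta+2)\mcV_A \log N$ and $(1 - r/N)(N^{\zeta+1}\mcS(s_0) + 1)$ that track, respectively, the number of cells and the total spectral mass distributed across the remaining $N - r$ points.

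The maximisation over $r = 1, \dots, N$ appears because the bound is assembled by a peeling argument over nested configurations: for each split point $r$ one obtains a valid bound with a head term increasing in $r$ and a tail term decreasing in $r$ (through the $1 - r/N$ factor), and since we only need \emph{an} upper bound, taking the maximum over $r$ is conservative and valid for every admissible $s_0 \in \N \cap [1, n_N]$. The constant $C = \frac{1}{2}/(1 - 1/e)$ arises from combining the $\frac{1}{2}$ in the definition of information gain with a geometric-series loss factor $(1 - 1/e)^{-1}$ incurred when converting the tail probability bound into an expectation bound over the regularity event.

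The step I expect to be the main obstacle is the discretisation and concentration argument controlling the per-cell occupancy: one must choose the grid resolution $n_N$, the exponent $\zeta$, and the cutoff $s_0$ simultaneously so that (i) the head remains logarithmic in $N$, (ii) the tail is faithfully captured by $\mcS(s_0)$ without blowing up the $N^{\zeta+1}$ prefactor beyond what the bound allows, and (iii) the failure probability of an overfull cell is summable and contributes only at the $\mcO(N^{1 - \zeta/d})$ order. Getting these three balances to hold simultaneously — in particular ensuring the dimension $d$ enters only through the residual exponent $1 - \zeta/d$ and not through the dominant head and tail terms — is the delicate part; the continuous differentiability of $k$ in a neighbourhood of $A$ is used here to guarantee the population spectrum decays fast enough for the tail bound to be meaningful and for the empirical-to-population spectral comparison to be uniform over the grid.
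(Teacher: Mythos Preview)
This theorem is not proved in the paper at all: it is quoted verbatim as Theorem~8 of \citet{srinivas2010gaussian} and used as a black box in the proof of \cref{lem:partition-info}. So there is no ``paper's own proof'' to compare your proposal against; the relevant benchmark is the original argument in \citet{srinivas2010gaussian}.

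Against that benchmark, your high-level structure (discretise $A$, split the spectrum into a head of size $s_0$ and a tail $\mcS(s_0)$, balance the two) is correct, but two of the mechanisms you propose are not the ones that actually drive the bound. First, the constant $C = \tfrac{1}{2}/(1-1/e)$ does not come from converting a tail-probability bound into an expectation over a ``regularity event''. The factor $(1-1/e)^{-1}$ is the classical Nemhauser--Wolsey--Fisher ratio for greedy maximisation of a monotone submodular function: information gain is submodular, so the greedy design is within $(1-1/e)$ of the optimal design, and one bounds the greedy information gain instead. There is no probabilistic event here. Second, the residual $\mcO(N^{1-\zeta/d})$ is not the failure probability of a Chernoff-type occupancy bound; it is a deterministic discretisation error. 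One replaces $A$ by a grid of $n_N$ points, uses the continuous differentiability (hence Lipschitzness) of $k$ to control the change in $\log|I+\reg^{-1}K|$ when snapping each $x_t$ to its nearest grid point, and the mesh width scales like $n_N^{-1/d}$, which produces the exponent $1-\zeta/d$. Once on the grid, the maximisation over $r$ and the head/tail split are purely linear-algebraic (this is Theorem~7 in \citet{srinivas2010gaussian}), with the $r n_N$ inside the log coming from bounding the top $s_0$ eigenvalues of a rank-$r$ restriction of the discretised operator.

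In short: your proposal is a plausible-sounding sketch, but the two load-bearing steps you identify (concentration for cell occupancy, probability-to-expectation for the constant) are not the steps that make the actual proof work, and a referee following your outline would get stuck trying to produce either of them.
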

The operator spectrum of the Mat\'ern kernel, required to use \cref{thm:theorem8}, can be bounded using the following.
\begin{theorem}{\citep[Theorem 2]{seeger2008information}}\label{thm:seeger}
Let $K(r)$ be an isotropic covariance function on $\mathbb{R}^d$ satisfying the conditions of Widom's theorem \citep{widom1963asymptotic}, with a spectral density $\lambda(\cdot)$. Suppose that the covariate distribution $\mu$ has bounded support and a bounded density, in that $\mu(x) \leq D$ for all $x$ and $\mu(x) = 0$ for all $\|x\|_2 > R$. Then,
\begin{equation*}
    \lambda_s \leq D(2\pi)^d \lambda \left(C_dR^{-1} s^{1/d}\right)(1 + o(1))
\end{equation*}
asymptotically as $s\to\infty$, where $C_d>0$.
\end{theorem}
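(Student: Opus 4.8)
The plan is to realise the operator spectrum $(\lambda_s)$ as the eigenvalues of a weighted convolution operator and then feed its phase-space symbol into Widom's theorem. By the Mercer construction, the operator spectrum of $K$ with respect to $\mu$ is the spectrum of $T\colon L^2(\mu)\to L^2(\mu)$, $(T\phi)(x)=\int K(x-y)\phi(y)\,d\mu(y)$. Conjugating by multiplication by $\sqrt{\mu}$ shows that $T$ is unitarily equivalent to $M_{\sqrt\mu}\,T_K\,M_{\sqrt\mu}$ acting on $L^2(\R^d,dx)$, where $T_K$ denotes convolution with $K$ and $M_{\sqrt\mu}$ multiplication by $\sqrt\mu$. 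This is a pseudodifferential operator whose leading Weyl symbol is $a(x,\omega)=\mu(x)\widehat{K}(\omega)$. Since the paper normalises $K=\kappa$ to be the Fourier transform of $\lambda$, one has $\widehat{K}(\omega)=(2\pi)^d\lambda(\omega)$, and hence $a(x,\omega)=(2\pi)^d\mu(x)\lambda(\omega)$; this is where the $(2\pi)^d$ in the statement originates.

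Next I would invoke Widom's theorem — whose hypotheses $K$ is assumed to satisfy — to obtain the leading asymptotics of the eigenvalue counting function $N(t)=\#\{s:\lambda_s\geq t\}$ as $t\downarrow 0$:
\[
N(t)=\frac{1}{(2\pi)^d}\,\vol{\{(x,\omega)\colon (2\pi)^d\mu(x)\lambda(\omega)>t\}}\,(1+o(1)).
\]
I would then remove the $x$-dependence using the two stated constraints on $\mu$: the pointwise bound $\mu(x)\le D$ enlarges the $\omega$-superlevel set, while $\mu(x)=0$ for $\|x\|_2>R$ confines the $x$-integral to the ball $B_R$. Factoring the phase-space volume gives
\[
N(t)\le \frac{\vol{B_R}}{(2\pi)^d}\,\vol{\{\omega\colon (2\pi)^d\lambda(\omega)>t/D\}}\,(1+o(1)).
\]

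The remaining step is an inversion. For the Mat\'ern density $\lambda$ is radially strictly decreasing, so $\{\omega\colon\lambda(\omega)>\eta\}$ is a Euclidean ball; writing its radius as $\Omega(\eta)$ (the inverse of the radial profile of $\lambda$) and using $\vol{B_R}=c_d R^d$, I get $N(t)\le c_d^2(2\pi)^{-d}R^d\,\Omega(t/((2\pi)^dD))^d(1+o(1))$. Setting $t=\lambda_s$ and using $s\le N(\lambda_s)$ yields the lower bound $\Omega(\lambda_s/((2\pi)^dD))\ge C_d R^{-1}s^{1/d}(1+o(1))$; composing with the decreasing radial profile of $\lambda$ and solving for $\lambda_s$ produces $\lambda_s\le D(2\pi)^d\lambda(C_d R^{-1}s^{1/d}(1+o(1)))$. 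The $(1+o(1))$ sitting inside the argument of $\lambda$ may then be folded into the prefactor because the Mat\'ern $\lambda$ is regularly varying, with polynomial tail $\|\omega\|^{-2\nu-d}$, so that $\lambda(x(1+o(1)))=\lambda(x)(1+o(1))$ as $x\to\infty$; this gives exactly $\lambda_s\le D(2\pi)^d\lambda(C_d R^{-1}s^{1/d})(1+o(1))$.

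The main obstacle is the clean application of Widom's theorem. One must verify that the weighted convolution operator genuinely belongs to the pseudodifferential class Widom requires, extract the symbol with the correct $(2\pi)^d$ normalisation dictated by the paper's Fourier convention, and — most delicately — control the Widom remainder so that it is truly of lower order than the leading phase-space volume uniformly as $t\downarrow 0$, allowing it to be absorbed into the stated $(1+o(1))$. By comparison, the phase-volume factorisation and the regular-variation bookkeeping in the final paragraph are routine.
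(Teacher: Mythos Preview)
The paper does not contain a proof of this statement: it is quoted verbatim as Theorem~2 of \citet{seeger2008information} and used as a black box inside the proof of \cref{lem:partition-info}. There is therefore nothing in the paper to compare your proposal against.

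For what it is worth, your sketch is the standard route to this result and is essentially the argument in \citet{seeger2008information}: conjugate to obtain a pseudodifferential operator with symbol $\mu(x)\widehat K(\omega)$, apply Widom's Weyl-law asymptotic for the eigenvalue counting function, bound the phase-space superlevel set using $\mu\le D$ and $\mathrm{supp}\,\mu\subset B_R$, and invert using the radial monotonicity and regular variation of the Mat\'ern spectral density. Your caveat about verifying the hypotheses of Widom's theorem and controlling the remainder is exactly the point the paper sidesteps by citing Seeger et al.\ and remarking that the Mat\'ern density ``satisfies the conditions of Widom's theorem''.
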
 
The required spectral density of a Mat\'ern kernel is given in \cref{eq:matern-density}. By \citet{seeger2008information}, it satisfies the conditions of Widom's theorem.

\begin{proof}[Proof of \cref{lem:partition-info}]
As the information gain is a sum of non-negative elements, we have for any $M\leq N$,
\[
\infomax{A}{M} \leq \infomax{A}{N}.
\] 
It therefore suffices to bound the maximum number of points that can fall in a partition $A$ before it would split. Let $\maxpts{A}$ denote this quantity. The proof proceeds in two parts: first we bound $\infomax{A}{\maxpts{A}}$ as a function of $\maxpts{A}$, then we bound $\maxpts{A}$ in terms of the horizon.

From \cref{eq:matern-density}, we have that the spectral density for the Mat\'ern kernel satisfies 
\begin{equation*}
    \lambda(\omega)=C_{\ell,d}(1+(\ell\omega)^2)^{-\nu-d/2} \leq C'_{\ell,d} \omega^{-(2v + d)}.
\end{equation*} where $C'_{\ell,d}=C_{\ell,d}\ell^{-(2v + d)}$. Utilising this within \cref{thm:seeger} with a uniform covariate distribution,\footnotemark
\begin{equation*}
    \lambda_s \leq C \radius{A}^{-d}  (s^{1/d} \radius{A}^{-1})^{-(2v+d)}  = C \radius{A}^{2\nu} s^{-(2\nu+d)/d},
\end{equation*}
for some constant $C>0$, where we used $\mu(x) = \mcV_A^{-1} = \diam{A}^{-d}$ for all $x \in A$, as $A$ is a $d$-dimensional cube.

As the bound on $\lambda_s$ for large $s$ is monotonically decreasing, we can bound the tail of the Mat\'ern kernel operator spectrum $\mathcal{S}(s_0)$ as
\begin{align*}
    \sum_{s > s_0} \lambda_s 
    &\leq C\radius{A}^{2\nu} \int_{s=s_0}^\infty  s^{-(2\nu+d)/d} =\mcO( \radius{A}^{2\nu}).
\end{align*}

\footnotetext{Errata: we forgot the $o(1)$ term appearing in the bound on $\lambda_s$ in \cref{thm:seeger}. We address this omission in chapter 4 of \citet{janz2022sequential}---the overall regret bound given in \cref{eq:final-regret} remains unchanged.}

We now apply \cref{thm:theorem8} in order to bound $\infomax{A}{\maxpts{A}}$. Choose $\zeta=d$, then \[\maxpts{A}^{\frac{d(2\nu-1)}{2\nu+d}}<n_{\maxpts{A}}<\maxpts{A}^d\] for $\maxpts{A}$ sufficiently large. Choose $s_0 = \lfloor \log \log N\rfloor$.

For these parameter choices and for any $r \leq \maxpts{A}$,
\[
s_0\log \frac{r n_{{\maxpts{A}}}}{\alpha} = \mcO(\log \maxpts{A}\log \log \maxpts{A}).
\]
As $1-r/\maxpts{A}\leq 1$, the second term in the maximum in \cref{eq:info-max} is $\mcO(\mcV_A\maxpts{A}^{d+1}\diam{A}^{2\nu}\log\maxpts{A})$. As the diameter satisfies $\maxpts{A}\leq \diam{A}^{-1/b} < \maxpts{A}+1$ by the definition of $\maxpts{A}$ this term is $\mcO(\log \maxpts{A})$. The final term in \cref{eq:info-max} is~$\mcO(1)$. 

All that remains is to bound $\maxpts{A}$. We consider two cases: first, if $A \in \mcA_1$ then $\maxpts{A} \leq \diam{A}^{-1/b} =\mcO(T^{\frac{q}{bd}})$. Otherwise, $A$ was created by some set $A'$ splitting, for which $N_{A'}<T$. Then,
\begin{align}
    N_A &\leq \diam{A}^{-1/b} = 2^{-1/b}\diam{A'}^{-1/b} \nonumber \\ &\leq 2^{-1/b}(\maxpts{A'}+1) <2^{-1/b}(T+1). \nonumber 
\end{align}
In both cases, $N_A=\mcO(T)$. \qedhere

\end{proof}

\subsection{Proof of \cref{lem:size-of-cover}, bound on size of cover }\label{sec:size-of-cover}

\newcommand{\splitset}{\Theta_T}

\begin{figure*}[h]
    \centering
    \includegraphics{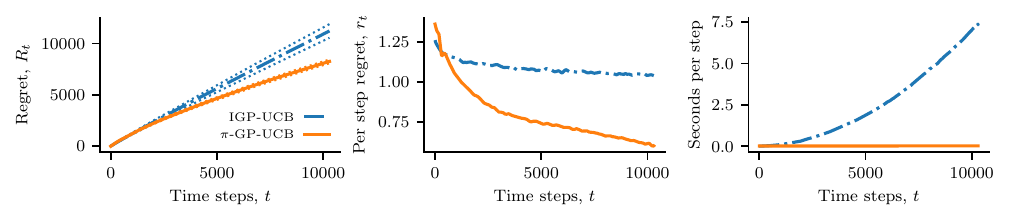}\vspace{-8mm}
    \caption{Comparison of performance of $\pi$-GP-UCB and IGP-UCB on the synthetic benchmark. Left, regret, $95\%$ confidence interval dotted. Centre, per step regret, smoothed using a convolution with a top hat of length $200$. Right, wall-clock time per step, in seconds.}
    \label{fig:results}
\end{figure*}

\begin{proof}
First, since any element $A \in \widetilde{\mcA}_T$ was either in $\mcA_1$ or was created by the splitting of a cover element $A' \in \widetilde{\mcA}_T \setminus \mcA_T$ into $2^d$ elements, we have that
\[ 
|\widetilde{\mcA}_T| = |\mcA_1| + 2^d|\widetilde{\mcA}_T \setminus \mcA_T|.
\]
By assumption $|\mcA_1| = \mcO(T^q)$. Denote $\splitset = \widetilde{\mcA}_T \setminus \mcA_T$. We now upper bound $|\splitset|$. Take the diameter of any element $A \in \mcA_1$ to be $\rho_0 \in (0,1]$ (e.g.~$\rho_0 = 1$ if $\mcA_1 = \{\dom \}$). For any given $T$, we have that 
\[
|\Theta_T|\ \leq\! \max_{x_1, \dotsc, x_T} |\Theta_T|\ = \! \max_{x_1,\dotsc,x_T}\sum_{i=0}^\infty \sum_{A \in \Theta_T} \!\1{\diam{A}=2^{-i}\rho_0}.
\] 
We upper bound the solution to this maximisation problem by considering just a subset of the constraints imposed by the splitting procedure.

\newcommand{\mintime}{\tau(A)}
\newcommand{\maxtime}{\tau'(A)}

First constraint: we have a \emph{budget constraint} derived from placing $T$ points. Let $\tau(A) = \min \{ t \colon A \in \mcA_t \}$ and $\tau'(A) = \max \{t \colon A \in \mcA_t \}$, and suppose there exists an $M(\diam{A}) \leq |\xdat{A}{\maxtime}| - |\xdat{A}{\mintime}|$ for all $A \in \splitset$. Then
\begin{align*}
    \sum_{i=0}^\infty &M(2^{-i} \rho_0) \sum_{A \in \Theta_T} \1{\diam{A} = 2^{-i} \rho_0} \\
    &\leq \sum_{i=0}^\infty \sum_{A \in \Theta_T} (|\xdat{A}{\maxtime}| - |\xdat{A}{\mintime}|) \1{\diam{A} = 2^{-i}\rho_0} \\
    & = \sum_{A \in \Theta_T}^{\phantom{T}} |\xdat{A}{\maxtime}| - |\xdat{A}{\mintime}|
    \ \leq \!\sum_{A \in \widetilde{\mcA}_T}^{\phantom{T}} |\xdat{A}{\maxtime}| - |\xdat{A}{\mintime}| \\
    & = \sum_{A \in \widetilde{\mcA}_T} \sum_{t=\mintime}^{\maxtime} \1{x_t \in A}
    = \sum_{t=1}^T \sum_{A \in \mcA_T} \1{x_t \in A} \\
    & = \sum_{t=1}^T |\{ A \in \mcA_t \colon x_t \in A \}| \ \leq 2^dT.
\end{align*}
Now we find a suitable $M(\cdot)$, which we shall refer to as the \emph{cost} of splitting an element $A \in \Theta_T$. Because $A$ split,
\[
|\xdat{A}{\maxtime}| + 1 > \diam{A}^{-1/b} \geq |\xdat{A}{\maxtime}|
\]
Suppose that $A'$ is the element that split to create $A$. Then $\diam{A'} = 2\diam{A}$ and $\tau'(A') + 1 = \mintime$. Therefore
\begin{align*}
|\xdat{A}{\maxtime}| - |\xdat{A}{\mintime}| 
&\geq |\xdat{A}{\maxtime}| - |\xdat{A'}{\tau'(A')}| - 1 \\
&\geq \diam{A}^{-1/b}(1-2^{-1/b}) - 2=M(\diam{A}).
\end{align*}

Second constraint: a \emph{supply constraint}. There are at most $\lceil \rho_0^{-d} \rceil$ elements of diameter $\rho_0$, and therefore at most $\lceil \rho_0^{-d} \rceil 2^{di}$ elements of diameter $2^{-i}\rho_0$ can be split, leading to  
\[
\sum_{A \in \Theta_T}^{\phantom{T}} \1{\diam{A} = 2^{-i}} \leq \left\lceil\rho_0^{-d} \right\rceil 2^{di}.
\]

Since $M(2^{-i}\rho_0)$ increases with $i$, the solution to the relaxed optimisation problem will be to buy all the available $A$ with smallest diameter, subject to the supply and budget constraints. Suppose the smallest $A$ split with this strategy has a diameter $2^{-z}\rho_0$ for some $z \in \N$. Then, since the supply constraint is binding and budget constraint is satisfied, we have that 
\[\sum_{i=0}^{z-1} M(2^{-i}\rho_0) 2^{di}\left\lceil\rho_0^{-d} \right\rceil \leq 2^dT.\]
Writing $\rho_0 = CT^{\alpha}$ for some $C>0$ and $\alpha\leq q/d$ and using the geometric series formula to solve for $2^z$, we obtain $2^z = \mcO(T^{\frac{b}{bd+1}})$, a quantity independent of $\alpha$. Counting all the cover elements of diameters $2^0\rho_0, \dotsc, 2^z\rho_0$, we have that $\sum_{i = 0}^{z} 2^{di} = \mcO(2^{dz})$, which is $\mcO(T^q)$. Since this was a construction that maximises $\Theta_T$, we have that $\widetilde{\mcA}_T = \mcO(T^q)$. 
\end{proof}

\subsection{Proof of \Cref{lem:image-bound}}\label{sec:proof-of-image-bound}

\newcommand{\numsplits}{g_T}
\newcommand{\numsplitslevel}{g_T^i}
\begin{figure*}[h]
    \centering
    \includegraphics{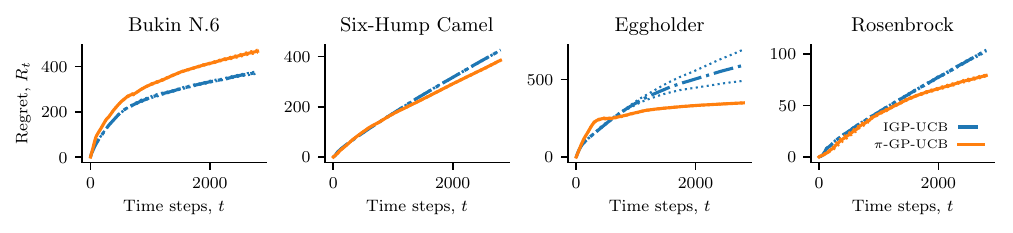}\vspace{-8mm}
    \caption{Comparison of performance of $\pi$-GP-UCB and IGP-UCB on the common two-dimensional global optimisation benchmark problems. Regret and $95\%$ confidence intervals plotted.}
    \label{fig:practical-results}
\end{figure*}

\begin{proof}
Let $B^0 = \mcA_1$ and define recursively $B^{i+1}$ to be the set of the hypercubes created by splitting each element in $B^i$ into $2^d$ hypercubes. Let $\rho_0 \in (0,1]$ be the diameter of elements in $\mcA_1$. We have that
\[
\mcA_t \subset \bigcup_{i \geq 0} B^{i} \implies \mcA_t \subset \bigcup_{i \geq 0}\! \left(B^{i} \cap \mcA_t \right).
\]
Suppose there exists $Z \in B^{i} \cap \mcA_t$ for some $i\geq 0$. By the splitting condition,
\[
Z \in \mcA_t\setminus B^0 \implies \diam{Z} > (|\xdat{Z}{t}| + 1)^{-b} \geq (t+1)^{-b}.
\]
Also, $Z \in B^{i}$ implies $\diam{Z} \leq 2^{-i}\rho_0$.
Therefore, $(t+1)^{-b} \leq \diam{Z} \leq 2^{-i}\rho_0$ so $2^i \leq (t+1)^b\rho_0=2^{J_t}.$ Let $B_t = \bigcup_{i \leq J_t} B^{i}$. We have $B^{i} \cap \mcA_t = \emptyset$ for all $ i > J_t$ and hence $\mcA_t \subset B_t$. We now bound the cardinality of $B_t$. We have $|B^{i}| = \lceil\rho_0^{-d}\rceil2^{di}$, so
\begin{align*}
|B_t|&=\sum_{i=0}^{J_t} |B^i| = \lceil\rho_0^{-d}\rceil\sum_{i=0}^{\lfloor J_t\rfloor} 2^{di} 
\leq  \lceil\rho_0^{-d}\rceil2^{dJ_t+1} \\
&\leq 4(t+1)^{db}. \qedhere
\end{align*}
\end{proof}

\section{Empirical validation}
We present an empirical comparison of $\pi$-GP-UCB and IGP-UCB on two types of functions: first, synthetic functions in the Mat\'ern kernel RKHS, where the conditions of the theory for both algorithms are met; second, standard global optimisation baselines, corresponding to a more realistic setting where the RKHS norm of the target function is not known.

We run both IGP-UCB and $\pi$-GP-UCB using a Mat\'ern kernel with parameters $\nu=3/2$ and $\ell=1/5$, and use a regularisation parameter $\reg = 1$. For simplicity, the problems are discretised onto a regular grid, such that $\dom = \{x_1, \dotsc, x_n \colon n=30^d\}$. We use a confidence parameter $\delta=1/10$, and compute the quantities $\info{\dom}{t}$ and $\info{A}{t}$ exactly at each time step. For our method, $\pi$-GP-UCB, we use an initial partition $\mcA_1$ of cardinality approximately $T^q$.

\paragraph{Synthetic functions} We benchmark on a set of synthetic functions which satisfy the assumptions behind the regret bounds of both algorithms. We construct each function $f$ by sampling $m=30d$ points, $\hat{x}_1, \dotsc, \hat{x}_m$, uniformly on $[0, 1]^d$, and $\hat{a}_1, \dotsc, \hat{a}_m$ each independent uniform on $[-1, 1]$ and defining $f(x) = \sum_{i=1}^m \hat{a}_j k(\hat{x}_j, x)$ for all $x \in \mcX$, where $k$ is a Mat\'ern $\nu=3/2$ kernel with lengthscale $\ell=1/5$. Both algorithms are given access to the exact RKHS norm of this function, computed as $\|f\|^2_k = \sum_{i=1}^m\sum_{j=1}^m \hat{a}_i\hat{a}_j k(\hat{x}_j, \hat{x}_i)$. Evaluations of these functions are corrupted with independent additive noise $\epsilon_t$ sampled uniformly on $[-1, 1]$, and both algorithms use $\subgauss = 1$, the corresponding subGaussianity constant. 

We present regret and run-times on this set of synthetic functions for $d=1,2,3$ in \cref{tab:full-results} and plot the results for $d=2$ in \cref{fig:results}. The results suggest that $\pi$-GP-UCB not only provides an improved worst-case analysis, but may also yield improved empirical performance. Moreover, we see that $\pi$-GP-UCB is significantly more scalable. We note that the runtime of $\pi$-GP-UCB is high in the case $d=1$. This is because in one dimension, $\pi$-GP-UCB converges rapidly, losing its computational advantages thereafter.
\begin{table}
    \centering
    \begin{tabular}{c c c c c}
    \toprule
    \multirow{2}{*}{$d$} & \multicolumn{2}{c}{IGP-UCB} & \multicolumn{2}{c}{$\pi$-GP-UCB} \\
    \cmidrule(lr){2-3}
    \cmidrule(lr){4-5}
    & regret & runtime & regret & runtime \\
    \midrule
    1 & 0.11 & 82 min & 0.09 & 12 min \\
    2 & 0.71 & 118 min & 0.52 & 97 s \\
    3 & 0.97 & 30 hours & 0.77 & 29 min \\
    \bottomrule
    \end{tabular}
    \caption{Tabulation of results for synthetic task with $d=1,2,3$. Averages values over 12 runs for a horizon $T=10000$. Regret is expressed as fraction of the regret incurred by pulling the arms uniformly.}
    \label{tab:full-results}
\end{table}

\paragraph{Global optimisation baselines} We also provide a comparison of the empirical performance of $\pi$-GP-UCB and IGP-UCB on four common two-dimensional global optimisation baselines,\footnotemark~with results plotted in \cref{fig:practical-results}. The function outputs are scaled to $[-1,1]$, and corrupted with additive noise uniform on $[-0.1, 0.1]$. Whilst we provide the tight subGaussianity parameter to both algorithms, we run both with the RKHS norm parameter $B=1$, corresponding to a realistic setting where the RKHS norm is not known in advance. Our algorithm, $\pi$-GP-UCB, performs competitively across this set of problems.
\footnotetext{See \url{http://www.sfu.ca/~ssurjano} \citep{surjanovic2013virtual} for information on these functions.}

\section{Related work}
While our theoretical guarantees are much stronger than those of existing practical kernelised methods like GP-UCB, IGP-UCB and KernelUCB, two existing methods achieve similar or better guarantees:

\textit{SupKernelUCB}. Introduced in \citet{valko2013finite}, SupKernelUCB uses a phased elimination procedure to create batches of observations that are independent of previous observations. This allows for the use of the stronger concentration inequalities that apply to i.i.d.~sequences, yielding a $\porder{(\deffe{\dom} T)^{1/2}}$ regret bound. While the algorithm is introduced for the case of a finite-armed bandit it can be extended to a continuum-armed bandit via a discretisation argument. SupKernelUCB is the kernelised version of stochastic linear bandit algorithm SupLinUCB \citep{auer2002using,chu2011contextual}. However, much like SupLinUCB \citep[Remarks 22.2]{banditbook}, it fails to achieve empirically sublinear regret even on very simple problems \citep{calandriello2019gaussian}. 

\textit{Hierarchical optimisation}. Extensions and generalisations of classic Lipschitz-continuity based methods enjoy strong regret guarantees under assumptions that are broadly similar to those in our work \citep{jones1993lipschitzian,munos2011optimistic,bubeck2011x}. Current upper bounds suggests for hierarchical methods are better for problems with a low degree of smoothness than the kernelised counterparts. It is an open question whether this holds in general, or whether the analysis of kernelised methods can be further improved.

\section{Discussion}\label{sec:discussion}
We have presented an algorithm for optimising functions in the RKHS of a  Mat\'ern family kernel, with a sublinear bound on regret for all smoothness parameters $\nu>1$ and demonstrated the practical effectiveness and scalability of the proposed algorithm. The empirical performance of $\pi$-GP-UCB might be improved by using the actual information gain, as opposed to an upper bound on the information gain, in determining when to split a set.
\bibliography{main}

\begin{thebibliography}{25}
\providecommand{\natexlab}[1]{#1}
\providecommand{\url}[1]{\texttt{#1}}
\expandafter\ifx\csname urlstyle\endcsname\relax
  \providecommand{\doi}[1]{doi: #1}\else
  \providecommand{\doi}{doi: \begingroup \urlstyle{rm}\Url}\fi

\bibitem[Abbasi-Yadkori et~al.(2011)Abbasi-Yadkori, P{\'a}l, and
  Szepesv{\'a}ri]{abbasi2011improved}
Yasin Abbasi-Yadkori, D{\'a}vid P{\'a}l, and Csaba Szepesv{\'a}ri.
\newblock Improved algorithms for linear stochastic bandits.
\newblock In \emph{Advances in Neural Information Processing Systems}, pages
  2312--2320, 2011.

\bibitem[Audibert and Bubeck(2009)]{audibert2009minimax}
Jean-Yves Audibert and S{\'e}bastien Bubeck.
\newblock Minimax policies for adversarial and stochastic bandits.
\newblock In \emph{COLT}, pages 217--226, 2009.

\bibitem[Auer(2002)]{auer2002using}
Peter Auer.
\newblock Using confidence bounds for exploitation-exploration trade-offs.
\newblock \emph{Journal of Machine Learning Research}, 3\penalty0
  (Nov):\penalty0 397--422, 2002.

\bibitem[Auer and Ortner(2010)]{auer2010ucb}
Peter Auer and Ronald Ortner.
\newblock {UCB} revisited: Improved regret bounds for the stochastic
  multi-armed bandit problem.
\newblock \emph{Periodica Mathematica Hungarica}, 61\penalty0 (1-2):\penalty0
  55--65, 2010.

\bibitem[Auer et~al.(2002)Auer, Cesa-Bianchi, and Fischer]{auer2002finite}
Peter Auer, Nicolo Cesa-Bianchi, and Paul Fischer.
\newblock Finite-time analysis of the multiarmed bandit problem.
\newblock \emph{Machine learning}, 47\penalty0 (2-3):\penalty0 235--256, 2002.

\bibitem[Bubeck et~al.(2011)Bubeck, Munos, Stoltz, and
  Szepesv{\'a}ri]{bubeck2011x}
S{\'e}bastien Bubeck, R{\'e}mi Munos, Gilles Stoltz, and Csaba Szepesv{\'a}ri.
\newblock X-armed bandits.
\newblock \emph{Journal of Machine Learning Research}, 12\penalty0
  (May):\penalty0 1655--1695, 2011.

\bibitem[Calandriello et~al.(2019)Calandriello, Carratino, Lazaric, Valko, and
  Rosasco]{calandriello2019gaussian}
Daniele Calandriello, Luigi Carratino, Alessandro Lazaric, Michal Valko, and
  Lorenzo Rosasco.
\newblock Gaussian process optimization with adaptive sketching: Scalable and
  no regret.
\newblock In \emph{Conference on Learning Theory (COLT)}, pages 1--25, 2019.

\bibitem[Chowdhury and Gopalan(2017)]{chowdhury2017kernelized}
Sayak~Ray Chowdhury and Aditya Gopalan.
\newblock On kernelized multi-armed bandits.
\newblock In \emph{Proceedings of the 34th International Conference on Machine
  Learning-Volume 70}, pages 844--853, 2017.

\bibitem[Chu et~al.(2011)Chu, Li, Reyzin, and Schapire]{chu2011contextual}
Wei Chu, Lihong Li, Lev Reyzin, and Robert Schapire.
\newblock Contextual bandits with linear payoff functions.
\newblock In \emph{Proceedings of the Fourteenth International Conference on
  Artificial Intelligence and Statistics}, pages 208--214, 2011.

\bibitem[Dani et~al.(2008)Dani, Hayes, and Kakade]{dani2008stochastic}
Varsha Dani, Thomas~P Hayes, and Sham~M Kakade.
\newblock Stochastic linear optimization under bandit feedback.
\newblock In \emph{COLT}, 2008.

\bibitem[Janz(2022)]{janz2022sequential}
David Janz.
\newblock \emph{Sequential decision making with feature-linear models}.
\newblock PhD thesis, University of Cambridge, 2022.

\bibitem[Jones et~al.(1993)Jones, Perttunen, and
  Stuckman]{jones1993lipschitzian}
Donald~R Jones, Cary~D Perttunen, and Bruce~E Stuckman.
\newblock Lipschitzian optimization without the lipschitz constant.
\newblock \emph{Journal of optimization Theory and Applications}, 79\penalty0
  (1):\penalty0 157--181, 1993.

\bibitem[Lattimore and Szepesvári(2020)]{banditbook}
Tor Lattimore and Csaba Szepesvári.
\newblock \emph{Bandit Algorithms}.
\newblock Cambridge University Press, 2020.

\bibitem[Munos(2011)]{munos2011optimistic}
R{\'e}mi Munos.
\newblock Optimistic optimization of a deterministic function without the
  knowledge of its smoothness.
\newblock In \emph{{Advances in Neural Information Processing Systems}}, pages
  783--791, 2011.

\bibitem[Rasmussen and Williams(2006)]{rasmussen2006gaussian}
Carl~Edward Rasmussen and Christopher K.~I. Williams.
\newblock \emph{Gaussian Processes for Machine Learning (Adaptive Computation
  and Machine Learning)}.
\newblock The MIT Press, 2006.
\newblock ISBN 026218253X.

\bibitem[Robbins(1952)]{robbins1952some}
Herbert Robbins.
\newblock Some aspects of the sequential design of experiments.
\newblock \emph{Bulletin of the American Mathematical Society}, 58\penalty0
  (5):\penalty0 527--535, 1952.

\bibitem[Scarlett et~al.(2017)Scarlett, Bogunovic, and Cevher]{scarlett17lower}
Jonathan Scarlett, Ilija Bogunovic, and Volkan Cevher.
\newblock Lower bounds on regret for noisy {G}aussian process bandit
  optimization.
\newblock In \emph{Proceedings of the 2017 Conference on Learning Theory},
  volume~65, pages 1723--1742, 2017.

\bibitem[Seeger et~al.(2008)Seeger, Kakade, and Foster]{seeger2008information}
Matthias~W Seeger, Sham~M Kakade, and Dean~P Foster.
\newblock Information consistency of nonparametric {G}aussian process methods.
\newblock \emph{IEEE Transactions on Information Theory}, 54\penalty0
  (5):\penalty0 2376--2382, 2008.

\bibitem[Srinivas et~al.(2010)Srinivas, Krause, Kakade, and
  Seeger]{srinivas2010gaussian}
Niranjan Srinivas, Andreas Krause, Sham Kakade, and Matthias Seeger.
\newblock Gaussian process optimization in the bandit setting: no regret and
  experimental design.
\newblock In \emph{Proceedings of the 27th International Conference on
  International Conference on Machine Learning}, pages 1015--1022, 2010.

\bibitem[Stein(2012)]{stein2012interpolation}
Michael~L Stein.
\newblock \emph{Interpolation of spatial data: some theory for kriging}.
\newblock Springer Science \& Business Media, 2012.

\bibitem[Surjanovic and Bingham(2013)]{surjanovic2013virtual}
S.~Surjanovic and D.~Bingham.
\newblock Virtual library of simulation experiments: Test functions and
  datasets, 2013.
\newblock URL \url{http://www.sfu.ca/~ssurjano}.

\bibitem[Valko et~al.(2013)Valko, Korda, Munos, Flaounas, and
  Cristianini]{valko2013finite}
Michal Valko, Nathan Korda, R{\'e}mi Munos, Ilias Flaounas, and Nello
  Cristianini.
\newblock Finite-time analysis of kernelised contextual bandits.
\newblock In \emph{Uncertainty in Artificial Intelligence}, 2013.

\bibitem[Vivarelli(1998)]{vivarelli1998studies}
Francesco Vivarelli.
\newblock \emph{Studies on the generalisation of Gaussian processes and
  Bayesian neural networks}.
\newblock PhD thesis, Aston University, 1998.

\bibitem[Widom(1963)]{widom1963asymptotic}
Harold Widom.
\newblock Asymptotic behavior of the eigenvalues of certain integral equations.
\newblock \emph{Transactions of the American Mathematical Society},
  109\penalty0 (2):\penalty0 278--295, 1963.

\bibitem[Zhang(2005)]{zhang2005learning}
Tong Zhang.
\newblock Learning bounds for kernel regression using effective data
  dimensionality.
\newblock \emph{Neural Computation}, 17\penalty0 (9):\penalty0 2077--2098,
  2005.

\end{thebibliography}

\appendix
\onecolumn
\section{Proving \cref{lem:cover-concentration}, concentration result}\label{appendix:proof-concentration}

\begin{lem}\label{lem:martingale-argument}
Let $(\Omega, \mcF, (\mcF_t)_{t\geq 0}, \Pr)$ be a filtered space, with $\mcF_0$ the trivial sigma algebra. Let $(x_t)_{t\geq 1}$ be a previsible sequence $x_t \colon \Omega \mapsto \R^d$ and let $(\epsilon_t)_{t \geq 1}$ with $\epsilon_t \colon \Omega \mapsto \R$ be a sequence of random variables adapted to the filtration, with $\epsilon_t$ 1-subGaussian conditionally on $\mcF_{t-1}$ for all $t$. Let $(N_t)_{t\geq 1}$ be a non-decreasing sequence of integers. Let $(\mcA_t)_{t\geq 0}$ be a sequence of random sets $\mcA_t \colon \Omega \mapsto 2^{2^\dom}$, such that $\mcA_0$ is $\mcF_0$-measurable, $(\mcA_t)_{t\geq 1}$ previsible and $1 \leq |\mcA_t| < N_{t}$ almost surely for all $t \geq 0$. Let $k\colon \dom \times \dom \mapsto \R$ be a symmetric, positive-semidefinite kernel. Then for any given $\delta \in (0,1)$ and $\eta > 0$, for all $t \geq 0$ and all $A\in\mcA_t$ we have
\begin{equation*}
    \|\epsilon^A_{1:t}\|_{(I + (K^A_t+\eta I)^{-1})^{-1}} \leq 2 \log \left(\det (K^A_t + I + \eta I)^{\frac{1}{2}} N_t / \delta \right),
\end{equation*}
with probability $1-\delta$, where $\epsilon_{1:t}^A$ for the random vector that is the concatenation of $(\epsilon_z \colon x_z \in A)_{z=1}^t$.
\end{lem}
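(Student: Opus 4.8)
The plan is to reduce \cref{lem:martingale-argument} to a standard self-normalised concentration inequality for vector-valued martingales, applied separately on each cover element and then combined with a union bound over the at most $N_t$ elements of $\mcA_t$. Concretely, I would use the method of mixtures / supermartingale argument behind \citet[Theorem 1]{abbasi2011improved} (equivalently, the kernelised version in \citet[Theorem 1]{chowdhury2017kernelized}): for a fixed compact $A$, define the filtration-adapted noise vector and the feature/kernel matrix restricted to the subsequence of times at which $x_t \in A$, and observe that the self-normalised quantity $\|\epsilon_{1:t}^A\|_{(I + (K_t^A + \eta I)^{-1})^{-1}}$ is exactly the object controlled by that theorem with regularisation $\eta$ (the $+I$ inside accounts for the extra identity in the mixture covariance). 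This gives, for each fixed $A$, the bound $\|\epsilon_{1:t}^A\|_{(I + (K_t^A+\eta I)^{-1})^{-1}} \leq 2\log\!\big(\det(K_t^A + I + \eta I)^{1/2}/\delta'\big)$ with probability $1-\delta'$, simultaneously for all $t \geq 0$.

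The subtlety is that $\mcA_t$ is itself random and previsible, so we cannot naively union bound over ``the elements of $\mcA_t$'': the set of elements changes with $t$. The clean way around this is to note that the collection of hypercubes that could ever appear is a fixed countable family — the dyadic refinements $\bigcup_{i\geq 0} B^i$ of the initial cover $\mcA_1$, exactly as in the proof of \cref{lem:image-bound} — and each such hypercube is a fixed compact set, so the concentration inequality above applies to it with no measurability issue. I would therefore fix this countable family $\mcB = \bigcup_{i \geq 0} B^i$, apply the self-normalised bound to each $A \in \mcB$ with its own failure budget, and then, on the event where all these bounds hold, restrict attention to those $A$ that actually lie in some $\mcA_t$ with $|\mcA_t| < N_t$. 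For the union bound to close with total failure probability $\delta$ while keeping the stated width (which involves $N_t$ and not $|\mcB|$), the budget assigned to a hypercube $A$ must depend on the first time $\tau(A)$ it could appear — essentially $\delta_A = \delta / (N_{\tau(A)} \cdot c_i)$ where $c_i$ accounts for the number of hypercubes at refinement level $i$ — and then one checks $\sum_{A \in \mcB} \delta_A \leq \delta$ using $|B^i| = \lceil \rho_0^{-d}\rceil 2^{di}$ together with the geometric-type decay of $N_{\tau(A)}^{-1}$ in the refinement level (since deeper hypercubes can only be created at later times, by the splitting rule $\diam{A}^{-1/b} < |\dat{A}{t}|+1$). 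Matching $2\log(\cdots/\delta_A)$ against the claimed $2\log(\det(K_t^A+I+\eta I)^{1/2} N_t/\delta)$ then only costs additive $\log$-terms that are absorbed, since $N_t$ is non-decreasing and dominates $N_{\tau(A)}$.

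The main obstacle I expect is precisely this bookkeeping: making the per-element failure budgets sum to $\delta$ while preserving the exact form of the bound with the ``live'' count $N_t$ rather than the much larger cardinality of the full dyadic family $\mcB$. This requires being careful that (i) the monotonicity $|\mcA_t| < N_t$ with $(N_t)$ non-decreasing lets us charge each element to its creation time, and (ii) the number of candidate hypercubes created by time $t$ is itself at most $\widetilde{N}_t$-like (this is exactly \cref{lem:image-bound}), so the accounting is consistent. Everything else — the supermartingale construction, the $\det$ factor, the conditional subGaussianity giving the MGF bound — is a direct transcription of the single-set argument in \citet{abbasi2011improved} and \citet{chowdhury2017kernelized}, applied on the subsequence $\{t : x_t \in A\}$, which remains a valid stopping-time-adapted subsequence because $\{x_t \in A\}$ is $\mcF_{t-1}$-measurable by previsibility of $(x_t)$ and $(\mcA_t)$.
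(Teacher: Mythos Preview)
Your approach is genuinely different from the paper's, and while it would likely suffice for the downstream regret bound, it does not prove the lemma at the level of generality stated.

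The paper does \emph{not} fix a countable deterministic family and union-bound over it. Instead, it folds the time-varying budget $N_t$ directly into the supermartingale: for each $A$, it constructs the mixture supermartingale $M_t^A$ (exactly as in \citet{chowdhury2017kernelized}), then defines $\widetilde{M}_t^A = M_t^A/N_t$, which is still a supermartingale because $(N_t)$ is non-decreasing. A single stopping time $\tau = \inf\{t : \exists A \in \mcA_t,\ \widetilde{M}_t^A > 1/\delta\}$ is then used, and the key step is that at time $\tau$ the union over $A \in \mcA_\tau$ has at most $N_\tau$ terms by assumption, so $\Pr[B_\tau] \leq N_\tau \cdot \delta/N_\tau = \delta$. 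This yields exactly the stated bound with the ``live'' $N_t$, with no residual terms and no reliance on any particular structure of the cover elements.

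Your route has two gaps relative to the lemma as stated. First, the lemma is abstract: $\mcA_t$ is an arbitrary previsible family of subsets with $|\mcA_t|<N_t$; nothing says the elements are dyadic hypercubes or that a deterministic countable superset exists. Invoking the dyadic family $\bigcup_i B^i$ and the splitting rule imports structure from \cref{lem:image-bound} that the lemma does not assume. Second, even granting that structure, your budget $\delta_A = \delta/(N_{\tau(A)}\cdot c_i)$ will not collapse to $\log(N_t/\delta)$ exactly: summability over the infinite family $\bigcup_i B^i$ forces an extra decaying factor (e.g.\ $i^{-2}$), so $\log(1/\delta_A)$ carries an additive $O(\log i) = O(\log\log t)$ term beyond $\log N_t$. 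That is harmless for the final $\widetilde{O}$ regret bound, but it is not the inequality claimed in the lemma, and ``absorbed'' has nowhere to go here. If you want to stay with a union-bound strategy, you would need to weaken the statement; the paper's stopping-time normalisation avoids this entirely.
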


\begin{proof}
For a function $g\colon \dom \mapsto \R$ and a sequence of real numbers $(a_t)_{t\geq 1}$, define
\[
\Delta^{g,n}_t = \exp\left\{(g(x_t) + a_t)\epsilon_t - \tfrac{1}{2}(g(x_t) + a_t)^2\right\},
\]
with $\Delta^{g,n}_0$ defined as equal to $1$ almost surely. Then $\Delta^{g,n}_t$ is $\mcF_t$ measurable for all $t\geq 0$. By the conditional subGaussianity of $\epsilon_t$, we have that $\E[\Delta^{g,n}_t | \mcF_{t-1}] \leq 1$ for all $t \geq 0$ almost surely. For a set $A\in 2^\dom$, define 
\begin{equation*}
    \mcM_t^{g,n}(A) = \Delta_0^{g,n} \prod_{z=1}^t (\Delta_z^{g,n})^{\1{x_z \in A}}.
\end{equation*}
Then, for any $A\in 2^\dom$ and all $t\geq 1$, 
$\E[\mcM_t^{g,n}(A) | \mcF_{t-1} ] \leq \mcM_{t-1}^{g,n}(A)$ and $\E[\mcM_t^{g,n}(A)] \leq 1$.

Let $\zeta = (\zeta_t)_{t \geq 1}$ be a sequence of independent and identically distributed Gaussian random variables with mean $0$ and variance $\eta > 0$, independent of $\mcF_\infty = \bigcup_{t\geq 0} \mcF_t$. Let $h$ be a random real valued function on $A$ distributed according to the Gaussian process measure $\GP(0, k\big|_A)$, where $k\big|_A$ is the restriction of $k$ to $A$. Define 
\[   
M_t^A = \E[\mcM^{h,\zeta}_t(A) | \mcF_\infty].
\]
Then $M^A_t$ is itself a non-negative supermartingale bounded in expectation by 1. Define $\widetilde{M}_t^A = M_t^A / N_t$. Since $N_t \geq 1$ for all $t \geq 0$ and is non-decreasing, $\widetilde{M}_t^A$ is~a non-negative supermartingale bounded in expectation by $1/N_t$. 

For $A \in \mcB(\dom)$, let $B^A_t = \{\omega \colon \widetilde{M}_t^A > 1/\delta\}$ and $B_t = \bigcup_{A \in \mcA_t} B^A_t$. Define the stopping time $\tau(\omega) = \inf\{t \colon \omega \in B_t \}$. Then
\[
    \Pr[B_\tau^A| \mcF_{\tau-1}] \leq \delta \E[ \widetilde{M}_\tau^A | \mcF_{\tau-1}] = \delta \E[M_\tau^A | \mcF_{\tau-1}]/N_\tau \leq \delta/N_\tau M^A_{\tau-1} \quad \text{a.s.}
\]
We now examine the probability of $B_\tau$. We have
\begin{align*}
    \Pr[B_\tau] = \E\left[\Pr[B_\tau | F_{\tau-1}]\right]
    &\leq \sum_{A\in\mcB(\dom)} \E\left[ \1{A \in \mcA_\tau}\Pr[B_\tau^A| \mcF_{\tau-1}]\right] 
    \leq \delta/N_\tau \sum_{A\in\mcB(\dom)}\E\left[ \1{A \in \mcA_\tau}M^A_{\tau-1}\right].
\end{align*}
The final expectation is complicated by the fact that the event $\{A \in \mcA_t \}$ is not independent of $M_{t-1}^A$. However,
\[
    \{A \in \mcA_t \} \subset \{A \in \mcZ \colon \mcZ \subset \mcB(X), |\mcZ| \leq N_t \}.
\]
The latter event holds with probability $1$ for all $t \geq 1$, and is therefore independent of $M_t^A$. This gives,
\begin{align}\label{eq:bound-bad1}
    \Pr[B_\tau] 
    &\leq \delta/N_\tau \sum_{A\in\mcB(\dom)}\E\left[ \1{A \in \mcA_\tau}M^A_{\tau-1}\right] 
    \leq \delta/N_\tau \sum_{A\in\mcB(\dom)}\E\left[ \1{A \in \mcZ \colon |\mcZ| \leq N_t}M^A_{\tau-1}\right] \\\label{eq:bound-bad2}
    &= \delta/N_\tau \E[M^A_{\tau-1}] \sum_{A\in\mcB(\dom)}\E\left[ \1{A \in \mcZ \colon |\mcZ| \leq N_t}\right] 
    \leq \delta,
\end{align}
and consequently
\begin{equation}\label{eq:stopping-time-construction}
    \Pr\left[ \cup_{t \geq 0} B_t\right] = \Pr[\tau < \infty] = \Pr[B_\tau, \tau < \infty] \leq \Pr[B_\tau] \leq \delta.
\end{equation}

Finally, by comparing with the proof of Theorem 1 in \citet{chowdhury2017kernelized}, it can be verified that
\[
    M_t^A = \det (K^A_t + I + \eta I)^{-\frac{1}{2}} \exp \left\{ \tfrac{1}{2} \|\epsilon^A_{1:t}\|_{(I + (K^A_t+\eta I)^{-1})^{-1}} \right\}.
\]
The statement of the lemma follows from using this expression with \cref{eq:stopping-time-construction}, and noting that logarithms preserve order.
\end{proof}

\begin{proof}[Proof of \cref{lem:cover-concentration}] To prove \cref{lem:cover-concentration}, first since $|\mcA_t| \leq |\widetilde{\mcA}_t| \leq \widetilde{N}_t$, we can use $\widetilde{N}_t$ from \cref{lem:image-bound} as the bound $N_t$ required for \cref{lem:martingale-argument}. Then the proof of \cref{lem:cover-concentration} follows the proof of theorem 2 in \citet{chowdhury2017kernelized}, with our concentration inequality, \cref{lem:martingale-argument}, used instead of their theorem 1.
\end{proof}
\end{document}